\documentclass{article}
\usepackage{enumitem}
\usepackage[margin=1.25in]{geometry}
\usepackage{graphicx} 
\usepackage{amsmath}
\usepackage{amsfonts}
\usepackage{amssymb}
\usepackage{booktabs}
\usepackage{adjustbox}
\usepackage{cellspace}
\usepackage{wrapfig}
\usepackage{amsthm}
\usepackage{mathtools}
\usepackage{csquotes}
\usepackage{array}
\usepackage{url}
\usepackage{natbib}
\usepackage[dvipsnames]{xcolor}
\usepackage{algpseudocode}
\usepackage{algorithm}
\usepackage{soul}
\usepackage{parskip}

\newtheorem{theorem}{Theorem}

\newtheorem*{explanation}{Explanation}

\DeclareMathOperator*{\argmax}{arg\,max}
\DeclareMathOperator*{\argmin}{arg\,min}

\newif\ifshowcomments
\showcommentstrue

\ifshowcomments
  
  \newcommand{\eric}[1]{{\color{red}{\bf\sf [Eric: #1]}}}
  \newcommand{\ericm}[1]{\marginpar{\color{red}\tiny{Eric: #1}}}
  \newcommand{\mingkaitext}[1]{{\color{RoyalBlue}#1}}
  \newcommand{\mingkai}[1]{{\color{orange}{\bf\sf [Mingkai: #1]}}}
  \newcommand{\mingkaim}[1]{\marginpar{\color{orange}\tiny{Mingkai: #1}}}
  \newcommand{\jinyutext}[1]{{\color{PineGreen}#1}}
  \newcommand{\jinyu}[1]{{\color{blue}{\bf\sf [Jinyu: #1]}}}
  \newcommand{\jinyum}[1]{\marginpar{\color{blue}\tiny{Jinyu: #1}}}
\else
  
  \newcommand{\eric}[1]{}
  \newcommand{\ericm}[1]{}
  \newcommand{\mingkai}[1]{}
  \newcommand{\mingkaim}[1]{}
  \newcommand{\jinyutext}[1]{}
  \newcommand{\jinyu}[1]{}
  \newcommand{\jinyum}[1]{}
\fi

\title{Critique of Agent Model}
\author{
  Eric Xing\textsuperscript{$\diamond$,†}\thanks{~~Co-first author}\ ,
  Mingkai Deng\textsuperscript{$\diamond$,†}$^*$,
  Jinyu Hou\textsuperscript{$\diamond$,†}
  \\~\\
  \textsuperscript{$\diamond$}Institute of Foundation Models, Mohamed bin Zayed University \\ of Artificial Intelligence \\
  \textsuperscript{†} School of Computer Science, Carnegie Mellon University
  \\~\\
  \{eric.xing, mingkai.deng, jinyu.hou\}@mbzuai.ac.ae
}
\date{June 15, 2026}
\begin{document}

\maketitle

\begin{abstract}
What is an agent? What constitutes agency?
With the rise of Large Language Model (LLM) systems marketed as ``coding agents'', ``AI co-scientists'', and other ``agentic" tools that promise to drive up productivity, and at the same time, ``existential" concerns such as AI escaping human control with destructive power under a speculative ``machine agency" against humans, 
it has become essential to clarify where automation ends and agency begins, both for building capable systems and for understanding whether and what to fear.
Drawing on Descartes' grounding of agency in independent thought, and on portrayals of autonomous beings in science fiction, we survey the current landscape of AI agents, and analyze agent architectures along five dimensions: goal, identity, decision-making, self-regulation, and learning.
Specifically, we argue that genuine agency requires these structures to be \emph{internalized within the system itself} rather than assembled through external scaffolding.
This distinction between \emph{agentic} systems, whose competence resides in engineered workflows, and \emph{agentive} systems, whose capabilities (including social interaction) arise endogenously, defines the boundary between systems designed for prescribed tasks, and those capable of operating in the open world with true autonomy. 
Building on this analysis, we propose the Goal-Identity-Configurator (GIC) architecture for a general-purpose agent model, combining hierarchical goal decomposition, identity evolution, simulative reasoning grounded in a separately trained world model, learned self-regulation, and self-directed learning from both real and simulated experience.
Furthermore, we share insight on the auditability, controllability, and safety of agentive systems that possess greater autonomy and ``agency", but remain under human oversight.   

\end{abstract}

\section{Introduction}
\label{sec:introduction}

What is an agent? What constitutes genuine agency? 
For centuries, the question of human agency has been central to philosophy, psychology, sociology, and economics. 
Across these traditions, agency has been associated with properties such as long-term goals, evolving identity, purposeful planning, formation of social relationships, self-regulation, self-reflection, all the way toward moral responsibility and free will. 
Philosophical accounts, from Aristotle's discussions of purposeful action~\cite{aristotle2009nicomachean} to later views by Descartes~\cite{descartes1641} that thinking defines existence (\emph{“Cogito, ergo sum"}), suggest that 
agents are not just static entities that respond to external stimuli, but dynamic individuals with the ability to reason independently and act freely but rationally in pursuit of goals and well-being.

Can such biologically rooted agency be realized through artificial and mechanical means? 
A familiar illustration of autonomous artificial agents appears in science fiction. \emph{Blade Runner}~\cite{scott1982bladerunner}, a genre-defining classic, portrays \emph{replicants}, a type of bio-engineered beings that rival or surpass humans in strength, agility, and intelligence. These replicants are by no means perfect: they experience confusion, make mistakes, and suffer harm. Yet they possess human-like bodies, read and speak, move and work in the physical world, form deep inter-agent bonds, and in some cases question their own sense of self. Eventually, some bravely step out of their assigned roles towards a future of uncertainty and freedom. 
Such thought experiments highlight that agency is not synonymous with operational excellence (although often called for), but instead 
involves the capacity for goal-directed actions, self-development, self-reflection, participation in complex social environments, 
and, ultimately, possession of free will, morality, and a drive for self actuation.

This deeper notion of agency stands in contrast to many modern systems labeled as ``agents'' in contemporary AI research and development. 
These systems are capable of executing complex tasks (e.g., software engineering, computer use, dance performance) through carefully engineered scaffolding, including predefined tools, workflows, and programmatic control loops that guide behavior through 
externally defined
structure~\citep[e.g.,][]{anthropic_claude_code,openclaw_openclaw,boston_dynamics_spot_2026}. 
While these systems have achieved impressive practical success, their capabilities largely arise from orchestrating predefined workflows within constrained environments. In many cases, behaviors are determined by externally specified tools, protocols or training processes~\citep[e.g.,][]{anthropic_mcp_2024,anthropic_agent_skills_2025,slime_github}, rather than by an endogenous,  flexible decision-making process and intrinsic will. 

We find it useful to distinguish between two levels of autonomous systems. \textbf{Agentic} systems, such as those described earlier, complete tasks autonomously through orchestrated tools and workflows; their competence resides primarily in the engineering around 
a given reasoning model such as a LLM.
\textbf{Agentive} systems, exemplified by biological agents and discussed at length in this paper, possess agency in the fuller sense:
they derive their capabilities \emph{endogenously} (e.g., maintaining long-term goals, evolving self-identity, simulating future possibilities, regulating when and how to reason, or learning better behaviors) rather than following prescribed procedures, whether at \textbf{inference time} (e.g., fixed planning-execution workflows) or across the \textbf{development lifecycle} (e.g., manual training--deployment--retraining cycles). 
Current AI systems are largely agentic but not yet agentive: much of their competence resides in their workflows and harnesses, not in the model itself. Consequently, such systems are often better understood as sophisticated software pipelines rather than genuinely autonomous agents.
While these systems represent meaningful progress, they address only a portion of the broader challenge of 
artificial 
agency.

Indeed, it is difficult to imagine how enumerating every possible behavior through tools, prompts, or skills will allow AI systems to scale to the diversity and adaptability observed in biological agents. 
Humans, for example, exhibit multiple tiers of intelligence (Figure~\ref{fig:human-capabilities}): linguistic and symbolic reasoning (e.g., reading, writing, coding), physical and spatial competence (e.g., navigation, manipulation), social understanding (e.g., coordinating and competing with other agents), and higher-level ``philosophical'' capacities (e.g., curiosity, self-reflection, and goal formation). 
A single cognitive architecture is able to support this broad range of behaviors without requiring explicit re-engineering for each new task. 

\begin{figure}
    \centering
    \includegraphics[width=\linewidth,page=2]{figures-cam/1.1-human-capabilities.pdf}
    \caption{Humans exhibit multiple layers of intelligence: linguistic and symbolic reasoning, physical and spatial competence, social understanding, and higher-level ``philosophical'' capacities.}
    \label{fig:human-capabilities}
\end{figure}

Motivated by this observation, we argue that agency should not be treated as the accumulation of external scaffolding, but rather as a property emerging from a model capable of developing its identity, pursuing goals, and expressing and organizing its behavior across diverse environments. 
Rather than constructing agents through increasingly complex software pipelines, we study the problem of \emph{modeling agency itself}: developing machine learning models capable of generating a broad range of actions with the flexibility, adaptability, and autonomy associated with natural agents (e.g., humans and other animals), and of learning autonomously and perpetually. We refer to such a model as an \textbf{Agent Model}.
Specifically, an agent model (AM) is a reasoning model that generates real-world actions based on its goals $g$ and identity $i$. 
Formally, an AM $\pi$ maps the current world state $s$ to a predicted action $a$ through, for example, a conditional probability distribution:
$$p_\pi(a \mid s, g, i).$$
Equipped with such a model, a machine can draw on conceptual knowledge and logical/mathematical reasoning for abstract problem-solving, as well as act in the physical world via its end actuators (e.g., a humanoid body). 
Crucially, conditioning on goal $g$ and identity $i$ enables the system to \textbf{inspect, decompose, and revise} its long-term objectives (e.g., self-preservation or safety constraints) and self-model (e.g., capabilities and roles) rather than leaving them implicitly distributed across model weights and thus difficult to modify. 
Whether these are kept fixed by design or updated dynamically is a hallmark of the distinction between \emph{agentic} and \emph{agentive} systems. 
Similarly, how the model $\pi$ selects actions and updates itself reflect the key differences: \emph{agentic} systems follow fixed decision-making procedures and require externally scheduled training to improve, while agentive ones \textbf{regulate its own} deliberation mode during inference (e.g., reacting immediately to emergency vs. planning carefully for a complex maneuver) and capability updates during learning (e.g., retreating into simulated practice to address an identified weakness). 
Agency, in this view, arises from intentional actions generated by the model itself rather than from passively following externally scaffolded instructions. We discuss these distinctions in more detail in \S\ref{sec:boundary}.
How, then, should such a model be built? 
A basic principle, which we discuss formally in \S\ref{subsec:critique-decision-making} and \S\ref{subsec:critique-learning}, is that the agent model must be kept functionally distinct from a world model~\cite{xing2025critiques}: the former decides what \emph{to do}, the latter predicts what \emph{will happen}. Collapsing both into a single model, as several recent proposals do~\cite{ye2026dreamzero,li2026functional_taxonomy_world_models,nvidia2026cosmos3}, conflates reward-driven action selection with fidelity-driven next-state prediction, undermining the reliability of both planning and simulation. 
At a high-level, constructing and training an Agent Model involves five key aspects: 
\textbf{goal, identity, decision-making, self-regulation, and learning}.
The past two years have seen an explosion of systems labeled as agents, accompanied by competing schools of thought on how such systems should be designed. 
Proposals for addressing some of the aforementioned aspects leading to an agent model were offered in these attempts, but a systemic treatment of all aspects with a single framework possible for implementation is still unavailable.
In this paper, we categorize these approaches and analyze their limitations towards scalable and general-purpose agency.
Based on such, we introduce the \textbf{GIC} (Goal-Identity-Configurator) architecture, 
which provides
concrete proposals for 
each of the five aspects
of 
artificial agency and resultant capabilities
within a single adaptive system, paired with a separately learned world model.
Specifically, the GIC architecture combines: 1) \underline{hierarchical goal decomposition} with persistent objectives; 2) an \underline{evolving identity} that adapts without needing retraining; 3) \underline{simulative planning} through an internal world model (System~II) alongside reactive action (System~I); 4) \underline{self-regulation} of when and how deeply to deliberate via a learned configurator (System~III); and 5) \underline{self-directed learning} from both real and simulated experience. We present these ideas in detail in the sections that follow.
\section{The Boundary Between Agentic and Agentive Systems}
\label{sec:boundary}


Having introduced the distinction between agentic systems, which complete tasks through externally orchestrated tools and workflows, and agentive systems, whose capabilities arise from internal organization, we now formalize the dimensions along which they differ. Our goal is not to dismiss existing agentic systems, but to identify the minimal properties required for genuine agency, as a guideline for inspiring plausible design and implementation. Each dimension below defines a spectrum: at one end, the relevant structure is fully prescribed by external engineering; at the other, it is maintained and revised internally by the agent as part of its own decision-making. 

\subsection{Preliminaries: Agent-Environment Model}
\label{subsec:agent-environment-model}
\begin{figure}
    \centering
    \includegraphics[width=0.85\linewidth]{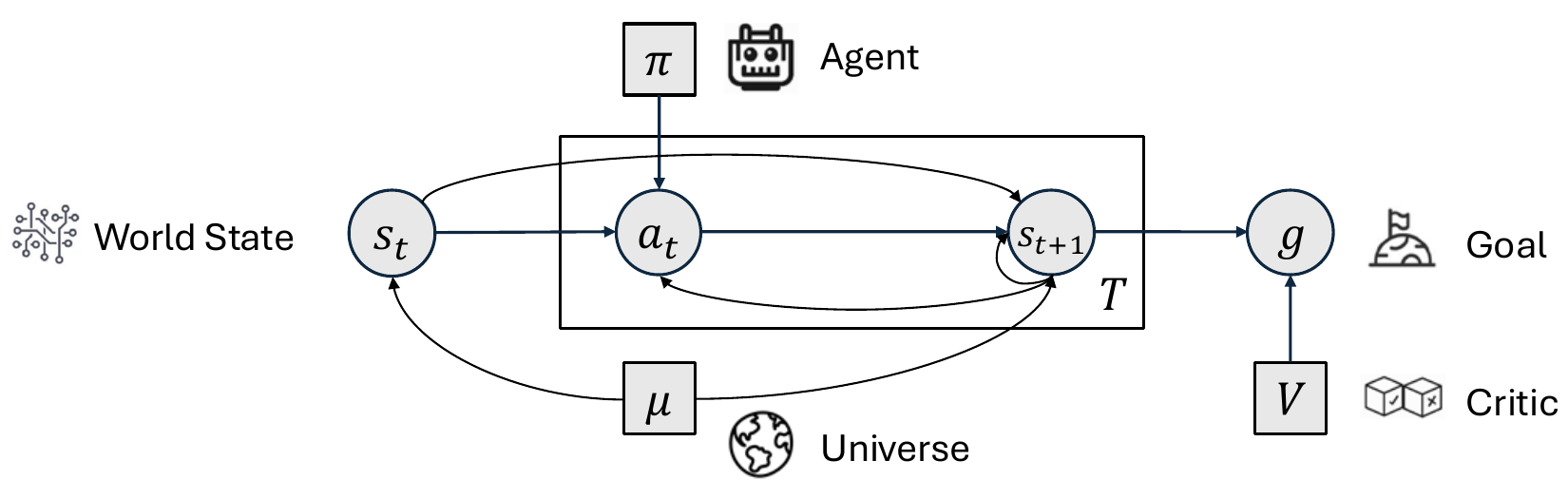}
    \caption{Illustration of an agent acting in an environment to achieve a goal.}
    \label{fig:optimal-agent}
\end{figure}

We begin with a minimal formulation of sequential decision making as a neutral foundation for the discussion that follows. Consider an environment (or \emph{universe}) represented by a stochastic dynamical system $\mu$, encompassing virtual, physical, and social components. The environment evolves over discrete time steps indexed by $t$ (continuous timesteps can be approximated by infinitesimally small discrete steps). 
Let $s_t$ denote the world (and internal) state at time $t$ and $a_t$ an action. The environment defines a transition distribution $p_\mu(s_{t+1} \mid s_t, a_t)$, and an agent is modeled as a policy $\pi$ that produces an action distribution $p_\pi(a_t \mid s_t)$. Given an initial state $s_t$, the interaction between $\pi$ and $\mu$ induces a trajectory distribution:
\begin{align}
  p^{\pi}_{\mu}(a_t, s_{t+1}, \dots, a_{T-1}, s_T \mid s_t)
  =
  \prod_{k=t}^{T-1}
  {\underbrace {\textstyle p_\pi(a_{k} \mid s_k)}_{\text{ agent }} } \ {\underbrace {\textstyle p_\mu(s_{k+1} \mid s_k, a_{k})}_{\text{ universe }} }.
  \label{eq:trajectory}
\end{align}
Equation~\ref{eq:trajectory} describes observable interaction dynamics without assuming any particular internal structure of the agent. 
The factorization also decomposes the subject of our discussion into exactly two objects: the \emph{agent} factor $p_\pi(a_k \mid s_k)$, which decides what \emph{to do}, and the \emph{universe} factor $p_\mu(s_{k+1} \mid s_k, a_k)$, which determines what \emph{happens next}. An \textbf{agent model} (AM) is a learned realization of the former; a \textbf{world model} (WM) is a learned approximation of the latter. 

We note that the term ``world model'' has recently been used more broadly, encompassing not only next-state prediction but also next-action generation~\cite{ye2026dreamzero,li2026functional_taxonomy_world_models,nvidia2026cosmos3}, in effect collapsing the two factors of Equation~\ref{eq:trajectory} into a single object. Throughout this paper, we keep them distinct: ``world model'' refers strictly to the universe factor, and ``agent model'' to the agent factor together with the internal structures, introduced below, that realize it. We believe the absence of a clear, functional definition of the agent model, distinct from the world model, may have contributed to action generation being absorbed into world-model frameworks by default; this paper offers one such definition and explores its consequences for how the agent reasons (\S\ref{subsec:critique-decision-making}, \S\ref{subsec:the-gic-arch}), why the two models call for different training signals (\S\ref{subsec:critique-learning}, \S\ref{sec:am-training}), and how failures are diagnosed and corrected (\S\ref{sec:am-safety}).

In the following subsections, we construct an agent model by introducing latent variables (goals, identity, plans, and regulation mechanisms) that formalize the properties of 
\emph{endogenous}
agency outlined above.
While goals and identity could also be viewed as components of the world state observable by other agents (e.g., one agent inferring another's goals from its behavior), we model them here as latent variables internal to the agent, since our focus is on the degree to which these structures are endogenously maintained vs. externally prescribed.

\subsection{Goals and Subgoals}
\label{sec:goals-subgoals}
We first enrich the agent-environment formulation by introducing \emph{goals}, which represent desired outcomes guiding decision-making over time. We denote the agent's goal at time $t$ by a latent variable $g_t$, conditioning action selection as $p_\pi(a_t \mid s_t, g_t)$. As with the other dimensions discussed below, we distinguish two limiting cases. 
On one end are externally specified goals, where objectives $g_t$ are supplied at each step (e.g., user instructions, prompts, or task specifications) and disappears once the interaction ends. 
On the other end are internally persistent goals $g$, which remain consistent over long horizons. 
An agent with persistent goals $g$ interprets immediate tasks not as its entire objective, but as subgoals $g_t$ within a larger, continuing trajectory of behavior. 
In this view, responding to individual user instructions is equivalent to having the top-level goal of ``satisfy external directions'', with the subgoals as each instruction. The agent's capacity, however, extends beyond this special case:
It may decompose a long-term goal $g$ into a sequence of subgoals $(g_1, g_2, \dots)$, ordered by dependency and priority, and revisable as new information arrives:
\[
    g_t \sim p_\delta(\cdot \mid s_t, g).
\]
This hierarchical structure isolates the difficulty of long-horizon planning in the decomposition module $\delta$, while each subgoal $g_t$ can be pursued by short-horizon capabilities that are easier to learn and supervise. 
A common way to evaluate goal-directed behavior is through a reward function $r(s_t, g_t)$ measuring the compatibility between the current state and the agent's current subgoal, and the long-term performance of a policy is evaluated by the expected discounted cumulative reward, also known as the value function~\cite{sutton1998reinforcement}, with the discount parameter $\gamma_t$ satisfying $\lim_{t \rightarrow \infty} \gamma_t = 0$:
\begin{align}
    V_{\pi,\mu}^{g_t}(s_t) &\vcentcolon= \mathbb{E}_{\pi,\mu} \left[ \sum_{k=t}^\infty \gamma_k r(s_k, g_t) \mathrel{\bigg|} s_t \right] \nonumber \\
    &= \lim_{T \rightarrow \infty} \sum_{(a_t, s_{t+1},\dots, s_T)} {\underbrace {\textstyle \sum_{k=t}^T \gamma_k r(s_k, g_t) }_{ \text{goal} } } \ {\underbrace {\textstyle p^{\pi}_{\mu}(a_t, s_{t+1}, \dots, s_T \mid s_t)}_{ \text{trajectory} } }
    \label{eq:value-function}
\end{align}
The degree to which goal formation, decomposition, and maintenance are endogenous to the agent is one axis along which agentic systems become agentive. Agentic systems largely execute externally specified instructions; agentive systems maintain, decompose, and revise their own goals as part of their ongoing decision-making.


\subsection{Identity}
We next introduce \emph{identity}: a latent variable $i_t$ capturing persistent properties that influence decision-making across time, such as capabilities, constraints, affordances, and relationships with other entities. Identity conditions action selection as $p_\pi(a_t \mid s_t, g_t, i_t)$, separating internal self-knowledge from observable dynamics.
A key question is how identity is maintained. At one end, identity is static: $i_t = i_0$ for all $t$, fixed by system design (e.g., system prompts, configuration files, or predefined roles). Such designs are practical when the environment is well-understood and predictable, but adaptation requires external re-engineering rather than endogenous updating. At the other end, identity evolves with the environment and internal state $s_t$ through the transition $\iota$:
\[
    i_t \sim p_\iota(i_t \mid s_t, i_{t-1}).
\]
An agent with adaptive identity revises its self-model in response to success, failure, or environmental feedback, analogous to how a professional updates self-assessment over the course of a demanding day. 
Identity in this sense functions not merely as initialization but as an evolving latent state participating in ongoing decision-making: capabilities and role assumptions may be revised, new affordances may be discovered, and relationships with other entities may be updated based on observed interactions.
The degree to which identity is originated, maintained and revised internally is one axis along which notions of agency differ.

\subsection{Decision-Making}

Given goals and identity, an agent must select actions that account for future consequences. 
Beyond simple fully observable settings~\citep[e.g.,][]{silver2016mastering,silver2017mastering}, however, the agent does not have direct access to the true world state $s_t$. Instead, it receives observations $o_t$ and infers a \emph{belief state} $\hat{s}_t$ representing its best estimate of the world. A learned \textbf{world model} $f$ can then predict the next belief state given a proposed action, according to $p_f(\hat{s}_{t+1} \mid \hat{s}_t, a'_t)$. 
This $f$ is precisely a learned realization of the universe factor of Equation~\ref{eq:trajectory}, now operating in belief space: it remains a model of the world, distinct from the agent model that queries it.
By simulating sequences of actions and their predicted consequences, the agent can approximate optimal behavior without access to the true environment dynamics. Formally, the optimal policy under the world model $f$ selects action sequences that maximize expected goal progress under simulated state transitions, conditioned on the agent's current subgoal $g_t$ and identity $i_t$:
\begin{equation}
    \pi^*_f(\hat{s}_t, g_t, i_t) =
    {\underbrace{\argmax_{ a'_{t:T'-1} \in \mathcal{A}(i_t) }}_{\text{possible actions}}} \
    \sum_{\hat{s}_{t+1:T'}}
    \Bigg({\underbrace{ \sum_{k=t}^{T'-1} \gamma_k r(\hat{s}_k, g_t)
    + \gamma_{T'} V_{\pi, f}^{g_t}(\hat{s}_{T'}) }_{\text{goal progress}}}\Bigg)
    \prod_{j=t}^{T'-1} \
    {\underbrace{p_f(\hat{s}_{j+1} | \hat{s}_j, a'_j).}_{
    {\scriptsize \shortstack{simulation with\\world model}}}}
    \label{eq:world-model-decision-making}
\end{equation}
We refer to this form of deliberation as \textbf{simulative reasoning} (a form of System~II reasoning): the agent proposes candidate actions, predicts their consequences through the world model $f$, and selects the sequence that maximizes expected long-term progress. In contrast to traditional logical reasoning (e.g., deduction, induction, abduction), simulative reasoning provides a general-purpose planning mechanism grounded in verifiable next-state prediction, applicable across diverse tasks without domain-specific procedures~\cite{xing2025critiques}.


In practice, exact optimization over Equation~\ref{eq:world-model-decision-making} is intractable. We thus denote by $\pi_f$ a simulative planner that approximates $\pi^*_f$. Its output is a \emph{plan} $c_t$ encoding the current belief, a selected action sequence, and predicted future states:
\begin{equation}
   c_t = (\hat{s}_{t}, a'_{t}, \hat{s}_{t+1}, a'_{t+1}, \dots, \hat{s}_{T'})
   \sim p_{\pi_f}(\cdot \mid \hat{s}_t, g_t, i_t).
\label{eq:simulative-planning}
\end{equation}
The plan provides structured grounding for coherent behavior over long horizons: predicted future states can be checked against subsequent observations to assess plan validity, while planned actions guide execution when anticipated states are encountered or when the current state is highly uncertain (e.g., landing an airplane in low visibility). Given a plan $c_t$, the agent selects concrete actions through an \textbf{actor} $\alpha$ that handles fine-grained reactive execution: $a_t \sim p_\alpha(\cdot \mid \hat{s}_t, c_t)$. This reactive component (System~I) captures execution patterns that are difficult to encode in structured plans and enables fast response when deliberation is unnecessary.
The key distinction between \emph{agentic} and \emph{agentive} systems is therefore whether planning is an internal computational process (i.e., the agent forms, revises, and acts on plans as a result of its own decision-making) 
or an externally imposed procedure (e.g., forced reaction, predefined workflow, or always-on model-predictive control). A separate question is how the agent determines \emph{when} and \emph{how much} planning to perform, which we address next.


\subsection{Self-Regulation}
Long-horizon planning introduces a question beyond \emph{what} action to take: \emph{how} should the decision be made? Different situations call for different amounts and types of internal computation, depending on urgency, difficulty, uncertainty, and resource budget. Some decisions may be handled by direct policy execution (e.g., dodging a ball), while others benefit from extended deliberation or replanning (e.g., strategizing a full match).
More broadly, such meta-decisions also encompass whether to pursue or abandon a goal, whether to act or refrain from acting, and how to prioritize competing objectives, extending beyond computational resource allocation to behavioral and normative dimensions. We refer to the capacity to control these internal modes of operation as \emph{self-regulation}.
We model this through a \textbf{configurator} $\kappa$, which outputs a regulation variable $u_t$ governing the agent's decision mode at each step (e.g. whether to act directly, continue executing an existing plan $c_{t-1}$, invoke additional planning, or revise goals: 
\[u_t \sim p_{\kappa}(\cdot \mid s_t,\, g_t, i_t, c_{t-1}).\]
Self-regulation is thus itself part of the agent's policy: the allocation of internal effort adapts with experience rather than following fixed rules or designer-specified workflows. 
Furthermore, the configurator may extend beyond inference-time deliberation to govern the agent's own learning process (e.g., deciding when to act in the environment, when to retreat into simulation for practice, when to update its world model, and when to revise its self-model). We return to this point below.
The degree to which deliberation control is endogenous to the agent is another axis along which agentic systems are distinguished from agentive ones. Agentic systems follow externally prescribed workflows; agentive systems organize their own computation in response to changing circumstances.
 

\subsection{Learning}
The preceding subsections describe how an agent acts given its current capabilities. A separate question is how those capabilities improve over time. In most existing systems, learning terminates before deployment, and behavioral change thereafter requires external intervention such as retraining or prompt redesign.
A growing body of work addresses this limitation under labels such as 
``never-ending learning''~\cite{mitchell2018never},
``recursive self-improvement''~\cite{patel2026darioamodei} or ``auto research''~\cite{karpathy2026autoresearch},
which use AI systems to automate aspects of the traditional training pipeline (e.g., generating synthetic tasks and curricula, performing automated evaluation).
However, in virtually all such ``AI training AI'' systems, the learning process itself remains external to the agent, with training decisions (e.g., when to learn, what data to use, how long to train, and when to stop) ultimately made by the human engineer, not by the agent whose capabilities are being updated.
A more complete notion of agency, on the other hand, treats learning as continuous and endogenous, taking two complementary forms: \emph{learning from real interaction}, where the agent updates its parameters $\theta$ based on deployment experience, and \emph{learning from simulated experience}, where the agent generates hypothetical trajectories through its \textbf{world model} $f$ and trains on them without real-world interaction. Formally, we define $\lambda$ as the learning process that outputs the next parameter $\theta_{t+1}$ given current parameters $\theta_t$ and real and simulated experiences $D_\mu$ and $D_f$ as below:
\[
\theta_{t+1} \sim p_\lambda(\cdot \mid \theta_t, D_\mu, D_f).
\]
Simulative learning is particularly valuable when real-world trial-and-error is dangerous, expensive, or slow.
Note that the two models implicated here learn from different signals: the world model $f$ improves by reducing prediction error against observed transitions, while the agent's decision-making components $\theta$ improve through goal-directed feedback, a separation whose importance we argue in detail in \S\ref{subsec:critique-learning}.
Another key difference from current ``AI-builds-AI'' approaches is that in the self-directed agent, learning is governed by the configurator $\kappa$ as part of the agent's own policy, rather than being imposed on the agent as an external schedule. In addition to model parameters $\theta$, the self-model $i$ may also be updated in the manner discussed earlier, as a fast improvement procedure without needing full retraining.
The degree to which learning is internally initiated and regulated is another axis along which {agentic systems} differ from {agentive systems}. Current systems, even those that automate training with AI, are still \emph{agentic} as the training loop remains external and the agent remains frozen unless retrained. \emph{Agentive} systems, by contrast, improve autonomously and perpetually through experience, augmenting external interaction with internal world-model simulations, and governing its own learning as an integral part of its ongoing decision-making.



\subsection{Coordination and Communication}

In a social environment, an agent must often decide whether to communicate, whom to engage, what information to share, and how to interpret the behavior of others in light of their likely identities, capabilities, and goals. Communication and coordination thus emerge as autonomous decisions, arising from the agent's native communicative abilities, an environment composed of other agents, and tasks that require multi-agent interaction. Natural agents exhibit a further capacity for \emph{self-organization}: individuals form, revise, and dissolve patterns of coordination, without requiring those structures to be specified in advance. 
In practice, many existing systems construct ``multi-agent teams''~\cite{autogen} or ``agent swarms''~\citep[e.g.,][]{openai_swarm}, but these often externally specify the nature and pattern of interaction (e.g., team membership, communication protocols, role assignments, and coordination logic) via the human designer. Such systems are better understood as a single scaffolded system consisting of a federation of tasks rather than a genuine multi-agent society. 
As with the other dimensions, how multi-agent interaction is handled delineates the boundary between agentic and agentive systems: \emph{agentic} systems require orchestrating interaction patterns externally; \emph{agentive} systems allow collective organization to emerge as an internal decision of participating agents. 

The properties introduced above together characterize what genuine agency should minimally possess. 
The distinction between \emph{agentic} and \emph{agentive} systems is not simply about whether relevant structures (e.g., \emph{goals}, \emph{identity}) exist, but in how these behaviors originate: through externally engineered pipelines that prescribe behavior, or an internal \emph{configurator} capable of adapting, revising, and organizing their own decision-making processes (e.g., planning, self-regulation, learning, and interaction). 
This perspective motivates the remainder of the paper, where we first examine whether and where current agentic systems fall short of this vision~(\S\ref{sec:agent-model-landscape}-\ref{sec:critiques}), and then present the \textbf{Goal-Identity-Configurator} (GIC) agent model architecture where these structures arise as components of a single adaptive system, paired with a separately learned world model~(\S\ref{sec:gic-agent-model}).

\section{Landscape of Systems Labeled as ``Agents''}
\label{sec:agent-model-landscape}


The term ``agent'' is currently applied to a remarkably broad range of systems, from simple automation scripts to embodied learning systems. This breadth, however, obscures an important distinction highlighted in the previous section: systems may appear goal-directed while differing fundamentally in where the organization of behavior resides. 
Rather than organizing the landscape by application domain, we examine it through the mechanisms that produce behavior. 
This perspective reveals a continuum from systems whose competence is almost entirely prescribed by software structure, to systems that increasingly internalize planning, acting, and adaptation within a single model. 

\paragraph{Program-Based Systems and Classical Bots}
From the earliest days of computing, practitioners have built software systems that act toward explicit goals through deterministic logic~\citep{newell1976computer,davis1977overview}. 
A thermostat observes temperature and applies fixed control rules; ELIZA~\cite{weizenbaum1966eliza} simulates psychotherapy through pattern matching (with surprising effectiveness); browser automation frameworks like Selenium~\cite{selenium_webdriver} and Playwright~\citep{microsoft_playwright} execute scripted interaction sequences in digital environments. 
These systems can clearly pursue objectives, but every aspect of their behavioral organization (e.g., goals, identity, decision-making, adaptation) is fixed by design. 
From the perspective developed earlier, these are best understood as software pipelines, not internally organized agents.

\paragraph{LLM Wrapper Systems}
A large fraction of contemporary systems marketed as ``AI agents'' place pretrained LLMs inside structured orchestration layers, whether it be plan-search-read-synthesize loops (e.g., DeerFlow~\cite{bytedance_deerflow}), tool-calling pipelines (e.g., Agent Skills~\cite{anthropic_agent_skills_2025}), or multi-agent coordination graphs (e.g., AutoGen~\cite{autogen}), which specify how behavior should unfold.
Deployed instances span customer-service automation (e.g., Decagon~\cite{decagon_website}), coding assistants (e.g., Cursor~\cite{cursor_agents}), personal assistants (e.g., OpenClaw~\cite{openclaw_openclaw}), and scientific automation (e.g., CRISPR-GPT~\cite{qu2025crispr}).
Despite often impressive task competence, the LLM in these systems contributes flexible reasoning and instruction following, while the surrounding scaffold is responsible for structuring goals, specifying identity, orchestrating planning, and compensating for model weaknesses. 
The organization of behavior thus resides in the engineering around the model, not in the model's own decision-making. 
\paragraph{LLM-Centered Systems}
A more recent class of systems shifts more of the behavioral structure into the model itself, training or fine-tuning LLMs to map observations to actions over extended trajectories (often with chain-of-thought~\cite{wei2022chain}). 
One direction trains models end-to-end for specific domains, including browser use (e.g., OpenAI Operator~\cite{openai_computer_using_agent_2025}), deep research (e.g., Tongyi-DeepResearch~\cite{tongyidr}), software engineering (e.g., Claude Code~\cite{anthropic_claude_code}), and game playing (e.g., SIMA-2~\cite{bolton2025sima}). 
A second, increasingly active direction trains general-purpose agentic LLMs that integrate reasoning, tool-use, and multi-step interaction within a single model (e.g., DeepSeek-V4~\cite{deepseekai2026deepseekv4}).
Compared with wrapper systems, these approaches internalize more of reasoning and action selection, representing an important step toward fuller agency.
However, goals still depend on human-specified short-term instructions; identity remains externally defined; decision-making relies on unregulated chain-of-thought; and behavioral change still requires retraining or prompt redesign rather than self-directed learning from deployment experience. 

\paragraph{Model-less Physical Systems}
Embodied platforms are often intuitively associated with agency, but physical embodiment alone should not be confused with internally organized decision-making. 
Traditional industrial robots (e.g., ABB~\cite{abb_robotics_website}, FANUC~\cite{fanuc_robots_website}) execute carefully programmed routines, while modern legged autonomous platforms (e.g., Boston Dynamics~\cite{boston_dynamics_spot_2026}, ANYbotics~\cite{anybotics_anymal_2026}) typically combine learned low-level control with externally scripted task logic.
These systems may exhibit high physical competence while still relying on externally imposed task decomposition, action planning, and adaptation procedures. Embodiment therefore expands the action space, but does not by itself resolve the problem of agency.
\paragraph{Embodied-Model Systems}
The most ambitious current efforts aim to integrate perception, reasoning, and control into unified embodied models~\cite{fung2025embodied}. 
Generalist humanoid and manipulation platforms (e.g., Figure AI Helix~\cite{figure2025helix}, Physical Intelligence $\pi$ series~\cite{intelligence2025pi}) and autonomous driving systems (e.g., Waymo~\cite{waymo_driver_2026} and Alpamayo~\cite{wang2025alpamayo}) increasingly adopt vision-language-action (VLA) architectures trained from demonstrations, imitation learning, and large-scale simulation (e.g., NVIDIA Isaac Lab~\cite{nvidia_isaac_lab_2026}).
In parallel, world action models (WAMs; e.g., DreamZero~\cite{ye2026dreamzero}) jointly predict future states and actions within a shared architecture, incorporating aspects of world model into the policy itself.
These systems represent the closest current approximations to internally organized agents, acquiring physical priors from large-scale data and demonstrating generalization to unseen tasks and environments.
Nevertheless, these systems are still limited in their sensory repertoire (e.g., no force, texture, hardness, or temperature).
Important aspects of agency, such as goal decomposition, identity evolution, self-regulated deliberation, and self-directed learning are missing.
As such, training remains heavily dependent on expert demonstrations; no mechanism exists for the agent to modulate how much deliberation a given situation warrants; most systems remain confined to short-horizon tasks with limited capacity for sustained goal pursuit or open-ended coordination; and adaptation beyond the training distribution still requires external human intervention.

\paragraph{Relation to Existing Surveys}
Parts of the landscape above have been documented in several recent surveys.
Wang et al.~\cite{wang2023survey} systematize LLM-based agents organized by profiling, memory, planning, and action modules; Wei et al.~\cite{wei2026agentic} extend this scope across foundational, self-evolving, and collective reasoning layers; Jiang et al.~\cite{jiang2025adaptation} study post-pretraining adaptation under a unified framework; Gao et al.~\cite{gao2025survey} and Fang et al.~\cite{fang2025comprehensive} focus on mechanisms of continual adaptation; and Chu et al.~\cite{chu2026agentic} survey world models in the context of agency.
These surveys offer comprehensive coverage of what current systems can do and how they can be improved, but they tend to take the notion of agency itself for granted, treating it as a label that applies whenever an LLM interacts with an environment, rather than examining what structural properties a system must possess to warrant the designation.

Taken together, the landscape above shows that while recent systems have become remarkably capable, much of that progress has come from improving external orchestration, narrowing domains, and exploiting increasingly powerful foundation models within carefully engineered workflows.
In many cases, the core structures of agency, whether it be endogenous goal decomposition, persistent self-models,  adaptive self-regulation, continual learning, or autonomous social organization, still reside outside the model.
This observation motivates the central question of the next section: across the dimensions that distinguish genuine agents from software pipelines, \emph{where} exactly do current systems fall short, and \emph{what} would a model capable of internalizing these structures require?

\section{Critique of  Agent Modeling}
\label{sec:critiques}

As discussed in \S\ref{sec:agent-model-landscape}, the past two years have produced a remarkably diverse ecosystem of systems labeled as ``agents'', from GUI operators trained on screenshot-to-action trajectories, to coding assistants that thrive in verifiable repositories, to humanoid robots with dual-system control stacks. 
These systems frequently promise, and in some cases have already delivered, massive economic value, but remain limited in their pathways toward autonomous, generally applicable, and continuously improving agentive capabilities. 
In this section, we offer critical discussions on 
common practices in today's systems along 
the five axis of agency
identified in \S\ref{sec:introduction}: goals, identity, decision-making, self-regulation, and learning.
Each contention is followed by a constructive alternative describing what a more complete agent model requires. The resulting  proposal of a general architecture for agent models is presented thereafter in \S 5. 

Across the diverse systems surveyed in \S\ref{sec:agent-model-landscape}, a common design philosophy, which we shall dissect, has emerged, which can be summarized as follows:
\begin{enumerate}[leftmargin=12pt]
    \item \textbf{Goal}: Continuously supply the agent with short-term instructions $g_t$ from a human user (e.g., natural language prompt or target image), for easy and general controllability. 
    \item \textbf{Identity}: Specify the agent's capabilities, constraints, and affordances externally via fixed system prompts and/or configuration files; invest significant effort in \emph{harness engineering} for reliable and customizable execution.
    \item \textbf{Decision-Making}: Prioritize black-box, end-to-end policies, possibly with adaptive computation (e.g., chain-of-thought for LLMs and output queries for VLAs), and train them via reinforcement learning, 
    due to simplicity and end-to-end optimizability.
    \item \textbf{Self-Regulation}: Expect effective allocation of deliberation to emerge from unconstrained RL training, and/or build planning into fixed, human-designed workflow stages (e.g., plan-then-act pipelines, always-on model-predictive control), to enable controllable and predictable behavior. 
    \item \textbf{Learning}: Train the agent through human-scheduled pipelines (i.e., RL in rule-based simulators for safety and scalability, or supervised demonstration/correction in the real-world for downstream alignment), 
    to facilitate controllability and safety. 
\end{enumerate}


While these choices are often practical and produce capable systems, we argue that each introduces fundamental limitations toward scalable, general-purpose agency. 
Furthermore, as we will show, underlying those limitations is a common structural absence of an explicit internal model of reality: namely, a \textbf{world model} capable of predicting the consequences of actions in a given state, across layers such as mental, physical, social, and natural worlds. We will return to this observation at the end of the section, and begin by examining each of the limitations below.

\subsection{Goal: From Step-by-Step Instruction to Hierarchical Decomposition}
\label{subsec:critique-goal}

\begin{displayquote}
\textit{Continuously supply the agent with short-term goals $g_t$ at each step, for easy and general controllability 
-- not feasible for harder tasks.
}
\end{displayquote}

Contemporary agentic systems overwhelmingly operate with externally supplied, short-horizon goals. Coding assistants such as Claude Code~\cite{anthropic_claude_code} and Cursor~\cite{cursor_agents} receive task specifications for each operation; personal assistants such as OpenClaw~\cite{openclaw_openclaw} respond to individual user queries; vision-language models such as $\pi$-series~\cite{intelligence2025pi} and Helix~\cite{figure2025helix} condition on a target images or short instruction for each manipulation episode. 
In all cases, the system's objective disappears once the interaction ends, and a new goal must be supplied before behavior resumes. 

While this design yields controllable systems for short-horizon tasks (e.g., pick up a bottle), it is difficult to scale to tasks that demand higher levels of autonomy (e.g., make wine over a year's time). 
Indeed, as discussed in the distinction between scaffolded systems and genuine agency (\S\ref{sec:boundary}), a truly autonomous agent should be instructable with a long-term goal, not hand-held at every step. 
For goals that span extended time horizons (e.g., developing a drug candidate, conducting a multi-month research project, executing a complex logistics operation), demonstrations are rare and end-to-end RL by trial-and-error is prohibitively slow, making direct optimization over the full horizon impractical. 

The alternative is to take a hierarchical approach to modeling goals (Figure~\ref{fig:goal-decomposition}). Rather than requiring a human to supply every subgoal, the agent can include and learn a \textbf{goal decomposition module} $\delta$ that breaks down a long-term goal $g$ into a sequence of subgoals $(g_1, g_2, \dots)$, ordered by dependency and priority, and revisable as new information arrives (as formalized in \S\ref{sec:goals-subgoals}). 
This decomposition isolates the difficulty of long-term planning in $\delta$, while each subgoal $g_t$ can be executed by short-horizon capabilities that are easier to learn and supervise. 
The result is a form of hierarchical planning that allows the agent to tackle problems requiring extended courses of action, without requiring that the entire trajectory be optimized or supervised as a single monolithic episode. 
During inference and planning, effective decomposition itself can be treated as a decision-making task, which, as we argue in \S\ref{subsec:critique-decision-making}, benefits from simulating the consequences of proposed subgoals (e.g., achievability, ordering, dependencies) through a hierarchical world model $p_f(s_{t+T} \mid s_t, g_t)$ capable of simulating the long-term consequence $s_{t+T}$ after executing $g_t$ over multiple time steps. 


\begin{figure}
    \centering
    \includegraphics[width=\linewidth,page=3]{figures-cam/3.1-critiques-goal.pdf}
    \caption{\small \textbf{Comparison of step-by-step subgoals to hierarchical decomposition of overall goal.} (\textbf{Left}) contemporary agentic systems are supplied a short-horizon goal $g_t$ at every step, and the objective disappears once the interaction ends. (\textbf{Right}) Alternative hierarchical approach instructs the system once with a long-term / overall goal $g$; a learned decomposition module $\delta$ breaks it into a sequence of subgoals $(g_1, g_2, \dots)$, selected based on outcomes predicted by a hierarchical world model $f$ and revised as the state $s_t$ evolves, each pursued by short-horizon capabilities that are easier to learn and supervise.}
    \label{fig:goal-decomposition}
\end{figure}

\subsection{Identity: From Harness Engineering to Adaptive Self-Models}
\label{subsec:identity-critique}

\begin{displayquote}
\textit{Specify the system's capabilities, constraints, and affordances externally via fixed system prompts or frozen latent vectors; invest in harness engineering for reliable and predictable behavior 
-- 
withholds full autonomy from the system.
}
\end{displayquote}

An agent's behavior is shaped not only by its goals and its model of the world, but also by what it knows about \emph{itself}: its capabilities, constraints, affordances, and relationships with other entities. 
Beyond the functional aspects, identity can even encompass broader dimensions such as values, loyalties, and moral commitments, which shape how an agent prioritizes and conducts itself in pursuit of its goals.
Just as the world model serves as the agent's theory of its environment, the self-model serves as its theory of its own mind. This distinction echoes Kant's separation of \emph{outer sense} (awareness of objects in the world) from \emph{inner sense} (awareness of one's own mental states)~\cite{kant1781}. 

Current practice, however, focuses on manual engineering to inform an agentic system about its capabilities, limitations, and how to use its tools. Identity is implemented as a hand-written system prompt describing the agent's role, available tools, and behavioral constraints. 
In systems built around tool-calling protocols such as MCP~\citep{anthropic_mcp_2024} and Agent Skills~\citep{anthropic_agent_skills_2025}, significant effort goes into ``harness engineering'' as advocated by OpenAI~\cite{lopopolo2026harness} and Anthropic~\cite{rajasekaran2026harness}: designing infrastructure that the agent can control, and describing that infrastructure to the agent in a way that maximizes effective use. 
In this case, the agent's self-model is specified externally and remains static. 
While designing strong interfaces for the agent is clearly valuable, current practice exogenizes what should be part of genuine agency: the formation and evolution of one's own identity. 
A fixed and/or externally specified identity cannot adapt when the agent encounters unexpected capabilities or limitations, especially when it is deployed in a new environment, or when it receives performance feedback that necessitates revision of its self-model. 
Without diminishing the value of well-designed infrastructure, the agent should be allowed to autonomously update its own understanding of its capabilities, constraints, and relationships based on experience, without requiring human re-engineering. 

The constructive solution draws on a \textit{fast--slow} update principle: rather than relying on a single adaptation mechanism, the agent maintains two 
complementary timescales of learning. \textit{Slow} updates modify model parameters $\theta_t$ (e.g. gradient-based training), which are computationally expensive, infrequent and more durable by design. \textit{Fast} updates revise a compact self-model $i_t$ more frequently during interaction, taking effect immediately without retraining, as formalized in Theorem~\ref{thm:fast-slow-dominates}.
This is analogous to how a professional revises self-assessment over a busy day without needing to constantly ``rewire their brain''. The intended effect is that the agent's behavior can reflect the most recent evidence about itself at any given moment, while slower parameter updates accumulate what has proven durable over longer horizons.
We show that, if fast updates in practice produce identity revisions that are better than random, the fast-slow agent learning accumulates strictly less regret in expectation than slow-only learning, and the gap widens with both the length of interaction and the number of update rounds.

\begin{theorem}[Fast-slow learning dominates slow-only learning, up to identity revision quality]
\label{thm:fast-slow-dominates}
Consider an agent operating over $K$ rounds, where each round $k$ consists of a slow update producing a base policy $\pi_k$, followed by $N_k$ steps of environmental interaction. In the slow-only setting, the agent acts under a fixed identity $i_0$ throughout each round. In the fast-slow setting, an identity evolver $\iota$ revises the self-model at each step, producing $i_t \sim p_\iota(\cdot \mid \hat{s}_t, i_{t-1})$.

Assume: (A1)~identity revisions improve the self-model, and better self-models produce better decisions; (A2)~the slow update operator is monotone in policy quality, both in the base policy it updates and in the data-generating policy. Then the fast-slow agent's cumulative regret satisfies:
\begin{equation}
    \emph{Regret}^{\emph{fast-slow}}_K \;\le\; \emph{Regret}^{\emph{std}}_K \;-\; \Omega\!\left(\textstyle\sum_{k=1}^{K} N_k\right),
    \label{eq:regret-bound}
\end{equation}
where $\emph{Regret}^{\emph{std}}_K$ is the cumulative regret of the slow-only agent, and the gap grows with both the total number of interaction steps and the number of update rounds.
\end{theorem}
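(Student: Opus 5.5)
The plan is to formalize (A1) and (A2) quantitatively, bound the per-step regret gap within one round, and then propagate that gap across rounds by induction using the monotonicity of the slow update.

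\textbf{Notation.} Define per-step regret $r_t(\pi,i) \vcentcolon= V^{*}(s_t) - V^{\pi(\cdot\mid\cdot,i)}(s_t)$, measured against a fixed comparator. Cumulative regret is $\text{Regret}_K = \sum_{k=1}^K \sum_{t=1}^{N_k} \mathbb{E}[r_t]$.

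\textbf{Quantitative form of (A1).} I read (A1) as follows: there is a constant $\epsilon>0$ (the ``better than random'' margin) such that, for any base policy $\pi$ and any step $t \ge 1$ within a round, the revised self-model satisfies $\mathbb{E}_{i_t\sim p_\iota}\big[V^{\pi(\cdot\mid\cdot,i_t)}(s_t)\big] \ge V^{\pi(\cdot\mid\cdot,i_0)}(s_t) + \epsilon$. The expectation is taken over the revision chain started from $i_0$. If the paper's intended (A1) is only weak improvement, I would instead state the bound with $\epsilon$ as an explicit parameter, and the $\Omega(\cdot)$ should be read as $\epsilon\sum_k N_k$.

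\textbf{Single-round comparison.} Fix round $k$ and couple the two agents so that they share the same base policy $\pi_k$. Summing the (A1) inequality over the $N_k$ steps gives a within-round gap of at least $\epsilon N_k$ (or $\epsilon(N_k-1)$ if step one uses $i_0$; this is absorbed into $\Omega$).

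\textbf{Propagation across rounds.} Write $\pi_k^{\text{fs}}$ and $\pi_k^{\text{std}}$ for the base policies of the two agents. I will prove by induction that $\pi_k^{\text{fs}} \succeq \pi_k^{\text{std}}$ in policy quality. The base case holds because both start from the same $\pi_1$. For the inductive step, write the slow update as $\pi_{k+1} = U(\pi_k, D_k)$, where $D_k$ is generated by the acting policy. By the single-round argument, the fast-slow acting policy, namely $\pi_k^{\text{fs}}$ equipped with evolving identity, dominates $\pi_k^{\text{std}}$ with $i_0$. This uses the inductive hypothesis together with the monotonicity in (A1) that better policies remain better after identity revision. (A2) states that $U$ is monotone in both arguments, so $\pi_{k+1}^{\text{fs}} \succeq \pi_{k+1}^{\text{std}}$.

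\textbf{Summing.} In round $k$ the fast-slow regret is then at most the regret of $\pi_k^{\text{std}}$ with evolving identity, which by (A1) is at most the slow-only round-$k$ regret minus $\epsilon N_k$. Summing over $k$ yields \eqref{eq:regret-bound}. The gap also grows with $K$: each round contributes a nonnegative extra term from data quality, and when $U$ is strictly monotone this term is positive, giving the claimed widening with the number of update rounds.

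\textbf{Main obstacle.} The difficulty is that (A1) must hold \emph{uniformly} over base policies, and the identity improvement must survive composition with a better base policy. That is, improvements from the self-model and from the parameters should not cancel. I would first try to make this an explicit part of (A1): the improvement $\epsilon$ holds for every $\pi$ in the policy class, and $V^{\pi(\cdot\mid\cdot,i)}$ is monotone in $\pi$ for each fixed $i$. If that is too strong, the fallback is to compare only per round under the coupling $\pi_k^{\text{fs}} = \pi_k^{\text{std}}$. This yields the $\Omega(\sum_k N_k)$ bound without the cross-round amplification, and I would state the additional growth in $K$ as a consequence of strict (A2).
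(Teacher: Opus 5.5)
Your within-round comparison and your induction on base-policy quality are the right two ingredients, and they match the paper's Steps~1 and~3. The paper derives your $\epsilon$ as $\bar{\varepsilon} \ge (1-\delta)\lambda - \delta B$ from a probabilistic form of (A1) with bounded degradation, whereas you assume an expected gain directly; that difference is only one of formalization. The gap is in how you chain the two ingredients.

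Write $V(\pi, i)$ for the value of base policy $\pi$ run with self-model $i$, and $\bar V$ for its state average. Your summing step bounds the fast-slow round-$k$ regret by that of $\pi_k^{\text{std}}$ run with the evolving identity. That requires $V(\pi_k^{\text{fs}}, i_t) \ge V(\pi_k^{\text{std}}, i_t)$, i.e.\ a base-policy advantage must survive identity revision. Your inductive step routes through the same inequality. This is neither (A1) nor (A2). Your remedy is to add it as a hypothesis, so as proposed you prove the theorem only under stronger assumptions.

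The fallback does not rescue it either. In the theorem the slow-only agent updates from its own fixed-identity experience, so $\pi_k^{\text{std}} \ne \pi_k^{\text{fs}}$ in general after round~$1$. Imposing the coupling therefore compares against a different agent, and it discards precisely the cross-round comparison that (A2) exists to control.

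The missing idea is to place the intermediate comparator at the fast-slow base with the \emph{initial} identity:
\[
V(\pi_k^{\text{fs}}, i_t) - V(\pi_k^{\text{std}}, i_0) = \big[V(\pi_k^{\text{fs}}, i_t) - V(\pi_k^{\text{fs}}, i_0)\big] + \big[V(\pi_k^{\text{fs}}, i_0) - V(\pi_k^{\text{std}}, i_0)\big].
\]
The first bracket is (A1) applied at the fast-slow base, with expectation at least $\epsilon$. The second is the cross-round gap $\eta_k$ at identity $i_0$, which is exactly what an (A2)-based induction controls, since the slow update resets to and is compared at $i_0$.

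The inductive step then needs no extra monotonicity. Chain $\bar V(\pi^{\text{fs}}_k, i_t) \ge \bar V(\pi^{\text{fs}}_k, i_0) \ge \bar V(\pi^{\text{std}}_k, i_0)$, using (A1) and then the induction hypothesis, and apply joint monotonicity of $U$ to get $\eta_{k+1} \ge 0$. Summing gives $\text{Regret}^{\text{fast-slow}}_K \le \text{Regret}^{\text{std}}_K - \epsilon\sum_k N_k - \sum_{k\ge 2} N_k \eta_k$ with $\eta_k \ge 0$. This uses (A1) only along the fast-slow base policies rather than uniformly over the policy class. It also keeps the cross-round term explicit instead of discarding it through an inequality, as your chain does.
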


\begin{explanation}
If the agent maintains and revises a self-model $i_t$ at each step (fast updates) in addition to periodic retraining (slow updates), then it accumulates strictly less regret than an agent that relies on slow updates alone. The advantage comes from better-informed decisions within each round and from higher-quality training data flowing into the next round's slow update.
\end{explanation}

\begin{proof}[Proof Sketch]
The per-step value difference $\Delta_t := V^{g}_{\pi_{k, i_t}, f}(\hat{s}_t) - V^{g}_{\pi_{k, i_0}, f}(\hat{s}_t)$ has strictly positive expectation $\bar{\varepsilon} > 0$ under A1, because the identity evolver succeeds with probability greater than $1/2$ and the bounded degradation on failure is outweighed by the gain on success. Summing over all steps gives a within-round regret reduction of $\sum_k N_k \bar{\varepsilon}$. For the cross-round term, A1 implies that the identity-revised policy collects higher-quality experience, and A2's monotonicity then guarantees that the slow update produces a base policy that is at least as strong as the one the slow-only agent would obtain, yielding a non-negative cross-round advantage $\eta_k \ge 0$ at each round. Combining both terms gives the bound. The formal proof, including the precise probabilistic conditions and the derivation of $\bar{\varepsilon}$, is in Appendix~\ref{appendix:fast-slow-dominates}.
\end{proof}

\begin{figure}
    \centering
    \includegraphics[width=0.55\linewidth]{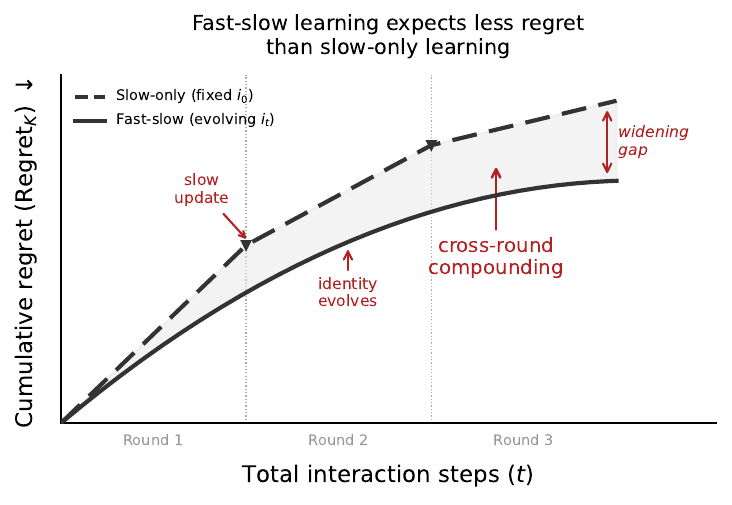}
    \caption{\small An agent that revises its self-model $i_t$ at each step 
    (fast-slow, solid) expects to accumulate less regret than one with 
    fixed identity $i_0$ (slow-only, dashed), as per 
    Theorem~\ref{thm:fast-slow-dominates}. The slow-only curve grows 
    linearly within each round, with slope drops only at round boundaries when slow-update happens
    ($\blacktriangledown$); the fast-slow curve is concave within each 
    round as identity evolution continuously reduces per-step regret. }
    \label{fig:fast-slow-regret}
\end{figure}

Theorem~\ref{thm:fast-slow-dominates} establishes that the fast-slow agent dominates structurally: it optimizes over a strictly larger space $(\theta, i)$ than the slow-only agent $(\theta, i_0)$. The within-round gain is available immediately and requires no further training. The cross-round compounding is realized when slow updates resume and benefit from the higher-quality experience that identity-revised interaction produces (Figure~\ref{fig:fast-slow-regret}).

A natural question following is how identity originates. Unlike the world model, which learns from data the environment supplies, the self-model describes properties of the agent itself, and evidence about them arises only from the agent's own behavior. Identity-bearing corpora (e.g., role descriptions, capability assessments, performance evaluations) teach the vocabulary of self-description but usually describe agents other than the one being trained, while self-model emergent in the agent's own state-action trajectories supply grounded content only for the environments and policy that generated them (\S\ref{sec:am-data}). Both sources therefore yield priors for the initial identity $i_0$, not a finished self-model. A genuine identity emerges only by grounding in the agent's own interaction, with the evolver $\iota$ revising $i_t$ so that what the agent believes about itself answers to realized performance rather than to its initial description. 

One practical benefit of this setup is fast adaptation to new environments or action spaces: 
during deployment, the agent starts from the seeded identity $i_0$ and rapidly adapts its self-model through interaction,
rather than waiting for a human to tune its system prompt.
Identity evolution thereby provides a form of continual learning at test time: the agent keeps learning while it operates, instead of alternating between frozen deployment and scheduled retraining (\S\ref{subsec:critique-learning}).
Like goal decomposition (\S\ref{subsec:critique-goal}), identity adaptation benefits from simulating the hypothetical outcome after assuming a certain identity (e.g., if one sees themself as an experienced negotiator, will they speak more confidently and win a better deal?), 
which draws on the agent's ability for internal simulation (i.e., world model). 
These considerations point toward an architecture in which identity 
serves
as the fast-adapting variable: its revisions should feed immediately into the agent's other decision-making processes (e.g., goal decomposition, planning, and self-regulation), while slower parameter updates consolidate what has proven durable across many such fast revisions.
In practice, the act of identity update can itself be a decision for the agent, as we discuss in detail in \S\ref{subsec:critique-self-regulation}.

\subsection{Decision-Making: From Black-Box Policies to Simulative Reasoning}
\label{subsec:critique-decision-making}

\begin{displayquote}
\textit{Train a sufficiently powerful black-box policy through end-to-end RL; planning capabilities will emerge in the chain-of-thought 
-- does not ground planning in real-world dynamics.
}
\end{displayquote}

A dominant instinct in current agent design is to treat the system as a single black-box policy: given the current observation $o_t$, the policy generates a sequence of intermediate latent variables $z_t$ (e.g., hidden-layer activations~\cite{hinton1995wake_sleep,dehghani2018universal} or chain-of-thought tokens~\cite{wei2022chain}) before emitting the next action. 
The hypothesis is that scaling this architecture and training it with massive demonstration data and/or reinforcement learning will cause advanced capabilities such as ``planning'' to emerge inside the intermediate representations, as has been recently advocated by Florence from Generalist AI~\cite{florence2026beyond}. 
This view is attractive because it is simple, aligns with the recent success of scaling next-token prediction~\cite{brown2020language} and chain-of-thought reasoning~\cite{guo2025deepseek}, and offers a clean training story: learn one powerful reasoning policy, and let it handle everything. 

We argue that this view conflates two distinct concepts: \textbf{internal compute} and \textbf{planning}. A neural network can learn to compute precise hidden-layer activations or generate useful reasoning tokens, ultimately better fitting its training data. This by itself, however, does not provide the core primitive that planning requires: a grounded way to reason about counterfactual environment dynamics (i.e., what would happen if we took action $a$ from state $s$), 
due to the lack of structure and supervision to that effect.
Indeed, agentic reasoning is fundamentally a control problem: estimating the world state $\hat{s}$, proposing candidate actions $\{a\}$, predicting their outcomes $\{\hat{s}'\}$, estimating goal progress $\{V\}$, and selecting the best action $a^*$ while accounting for prediction reliability. Current reasoning models (e.g., o1~\cite{openai2024o1}, R1~\cite{guo2025deepseek}) generate extended chains of thought that may \emph{describe} possible futures, but these descriptions are not grounded in a model that predicts state transitions from observations. The result is prediction based on narrative plausibility (e.g., token probability) rather than real-world consistency, with no guarantee of correct planning. 
As Xing et al.~\cite{xing2025critiques} argue, text can be a powerful component of world-state representation, but only when anchored to real-world dynamics through a world model trained with objectives grounded in data reconstruction. 
Without such grounding, more reasoning tokens can simply mean more opportunities for confident but unfounded extrapolation. 
A \textbf{world model}, which takes the current estimated state $\hat{s}$ and action, and predicts the next state $\hat{s}'$, thus emerges as the missing component that enables grounded decision-making based on predicted outcomes, detecting when the system is extrapolating beyond its competence and improving planning reliably without entangling it with the entire policy. 

\begin{figure}
    \centering
    \includegraphics[width=\linewidth,page=1]{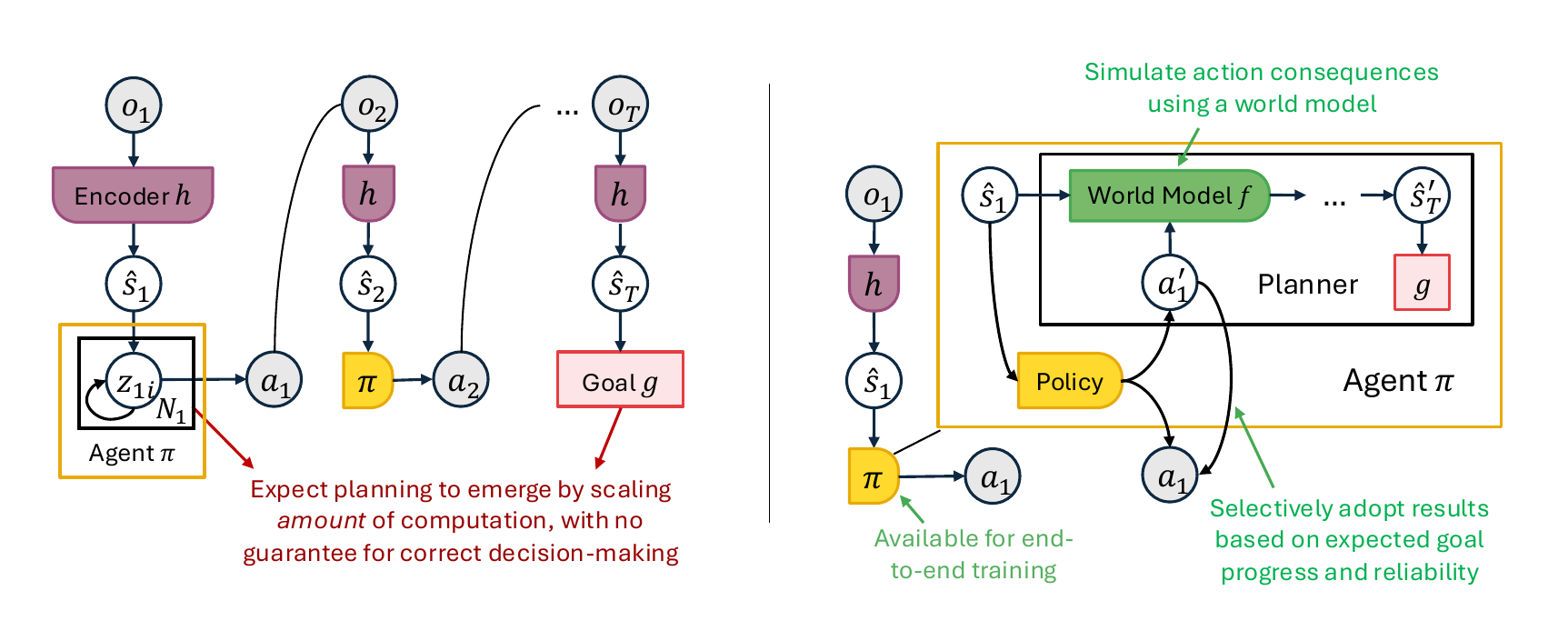}
    \caption{\small \textbf{Comparison of reactive policy (System~I) and simulative reasoning (System~II).} (\textbf{Left}) A reactive policy maps observations to actions through unconstrained intermediate variables (e.g., hidden activations or chain-of-thought tokens). Reasoning is based on narrative plausibility rather than grounded dynamics, without guarantee of correct decision-making. (\textbf{Right}) Simulative reasoning uses a world model $f$ to predict the consequences of candidate actions, evaluating goal progress through a critic $v$, and selecting the best action while accounting for prediction reliability.
    The critic module is not depicted.
    }
    \label{fig:decision-making}
\end{figure}

Our position is therefore not that reactive policies cannot reason, nor that agents should always plan. Rather, even with a strong baseline policy $\pi$, introducing an explicit world-model-based simulation component $f$, when used selectively based on its reliability, provides the missing counterfactual engine. This claim can be made precise: as we show formally in Theorem~\ref{thm:world-model-improve-policy}, 
if a reasonably accurate world model exists, \emph{any} baseline policy can be augmented with it to obtain a mixed policy $\pi_{\text{mix}}$ that is at least as good, if not better.
\begin{theorem}[World-Model-Based Planning Improves Any Policy]
    \label{thm:world-model-improve-policy}
    Given a world model $f$ such that given any state-action pair $(s, a)$, relative to the universe $\mu$, the prediction error for the next state $s'$ is bounded in terms of total variation (TV) as below:
    $$\text{TV}\left( p_f(s' \mid s, a), p_\mu(s' \mid s, a) \right) \leq \epsilon.$$
    Also assume discount schedule $\{\gamma_k\}_{k=t}^\infty$ where $\gamma_k = \gamma^{k-t}$ for $\gamma \in (0, 1)$, and the reward is bounded as $r(g, s) \leq R_\text{max}$. Then for any policy $\pi$, there exists $\pi_\text{mix} = \phi(f,\pi, \epsilon)$ such that $$V^g_{\pi_\text{mix}} \geq V^g_{\pi}.$$
\end{theorem}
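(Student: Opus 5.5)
The plan is to build $\pi_\text{mix}$ as a one-step (or finite-horizon) lookahead improvement of $\pi$ using $f$, and to guard it with a safety margin depending on $\epsilon$. The guard makes the policy deviate from $\pi$ only when the model predicts a gain that exceeds the worst-case model error. Throughout, write $V^g_{\pi,\mu}$ for the true value under the universe $\mu$ and $V^g_{\pi,f}$ for the value under $f$. With $\gamma_k=\gamma^{k-t}$ and $|r|\le R_\text{max}$, every value function is bounded by $R_\text{max}/(1-\gamma)$.

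The first step is a simulation lemma in the form we actually need, namely for one-step $Q$-values of the fixed baseline $\pi$. Define
\[
Q^\pi_\mu(s,a)=r(s,g)+\gamma\,\mathbb{E}_{s'\sim p_\mu(\cdot\mid s,a)}V^g_{\pi,\mu}(s'),
\]
and define $\tilde Q^\pi(s,a)$ identically but with $p_f$ in place of $p_\mu$. For a function bounded in absolute value by $B$, expectations under two laws at TV distance $\le\epsilon$ differ by at most $2B\epsilon$. Hence
\[
|\tilde Q^\pi(s,a)-Q^\pi_\mu(s,a)|\le \frac{2\gamma\epsilon R_\text{max}}{1-\gamma}=:\beta .
\]
Using the true $V^g_{\pi,\mu}$ inside $\tilde Q$ sidesteps compounding errors. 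If the intended reading instead uses $V^g_{\pi,f}$ as the critic, as in Equation~\ref{eq:world-model-decision-making}, I would add the standard compounded bound $|V^g_{\pi,f}-V^g_{\pi,\mu}|\le 2\gamma\epsilon R_\text{max}/(1-\gamma)^2$. That bound comes from the usual telescoping over the trajectory factorization (Equation~\ref{eq:trajectory}), and it only enlarges $\beta$.

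The second step is the construction $\phi(f,\pi,\epsilon)$. At state $s$, let $a^*\in\argmax_a \tilde Q^\pi(s,a)$. Set $\pi_\text{mix}(s)=a^*$ if
\[
\tilde Q^\pi(s,a^*) > \mathbb{E}_{a\sim\pi(\cdot\mid s)}\tilde Q^\pi(s,a)+2\beta ;
\]
otherwise let $\pi_\text{mix}(\cdot\mid s)=\pi(\cdot\mid s)$. This is exactly the ``use the world model selectively based on its reliability'' idea. The step-one bound then gives
\[
\mathbb{E}_{a\sim\pi_\text{mix}}Q^\pi_\mu(s,a)\ge \mathbb{E}_{a\sim\pi}Q^\pi_\mu(s,a)=V^g_{\pi,\mu}(s)
\]
at every $s$. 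When we switch, the true advantage is at least $2\beta-2\beta\ge 0$; when we don't, equality holds trivially.

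The third step applies the policy improvement theorem, or equivalently the performance-difference lemma. Since $\pi_\text{mix}$ is greedy-or-equal with respect to $Q^\pi_\mu$ everywhere, unrolling the one-step inequality along trajectories of $\pi_\text{mix}$ under $\mu$ gives $V^g_{\pi_\text{mix}}\ge V^g_\pi$ pointwise. Boundedness makes the unrolled tail vanish, which yields the claim. The part I expect to need the most care is not this step but the interface with the paper's definitions. The goal $g$ is held fixed while the discount is time-indexed, so $V$ must be treated as stationary, which works because $\gamma_k=\gamma^{k-t}$ is exactly geometric. I also need the guard to be computable from $f$, $\pi$, and $\epsilon$ alone. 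If $V^g_{\pi,\mu}$ is considered unavailable, I would use $V^g_{\pi,f}$ with the larger compounded $\beta$, which still yields the non-strict inequality. Strict improvement is not claimed, so no further assumption is needed.
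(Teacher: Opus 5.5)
Your proposal is correct and follows essentially the paper's argument. Both deviate from $\pi$ only when the model-estimated advantage exceeds twice the simulation-lemma error, so every switch has positive true advantage, and both then conclude via the performance-difference lemma (equivalently, the policy improvement theorem). The differences are minor. You propose the one-step greedy action with respect to the model $Q$-values of $\pi$ rather than the model-optimal action $\pi^*_f(s)$, which is harmless because the guard, not the candidate, carries the guarantee. Your fallback with $V^g_{\pi,f}$ yields the paper's margin of order $2\gamma R_{\text{max}}\epsilon/(1-\gamma)^2$, whereas your tighter one-step $\beta$ needs the true $V^g_{\pi,\mu}$ and so, as you note, is not a function of $(f,\pi,\epsilon)$ alone.
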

\begin{explanation}
    If you have a reasonably accurate world model $f$, then you can augment any baseline policy $\pi$ with it to obtain a mixed policy $\pi_{\text{mix}}$ which will perform better than, or at least equal to, the original policy.
\end{explanation}
\begin{proof}[Proof Sketch]
First, we observe that based on the Simulation Lemma~\cite{kearns2002near}, if the world model $f$ approximates the true environment $\mu$ closely, then the state values and Q-values they produce will differ at most by a small error $\frac{2 \gamma R_{\text{max}} \epsilon}{(1-\gamma)^2} \vcentcolon= \epsilon_{\text{model}}$.
Next, given any policy $\pi$, we define a mixed policy $\pi_{\text{mix}}$ that follows the best action selected by world-model-based planning $\pi^*_f$ only when its value is more than $2 \epsilon_{\text{model}}$ higher than that of $\pi$. Because of this margin, whenever $\pi_{\text{mix}}$ follows $\pi^*_f$, it would be a true improvement on $\pi$ in the real environment. Otherwise, it just falls back to $\pi$. Finally, the Performance Difference Lemma~\cite{kakade2002approximately} shows this guarantees $\pi_{\text{mix}}$ achieves at least the same value as $\pi$, and strictly better whenever the WM's improvement is adopted at least once.
\end{proof}
The detailed proof can be found in Appendix~\ref{appendix:world-model-improve-policy}. Note that uniform improvement calls for selective planning: the mixed policy follows the world-model-based plan only when its predicted improvement exceeds a safety margin for model error, and falls back to the baseline otherwise. Even a strong policy is never made worse, and is strictly improved whenever the world model identifies a better action.
Note also that the theorem's premise of a TV-bounded prediction error $\epsilon$ is only credible when the world model is trained for predictive fidelity. If the world model's parameters were instead shaped by the agent's reward objective, $\epsilon$ would no longer measure distance from reality, and the guarantee would be vacuous; we return to this point in \S\ref{subsec:critique-learning}.

We call this form of decision-making \textbf{simulative reasoning} (Equation~\ref{eq:world-model-decision-making}), which intuitively corresponds to \textbf{System~II}, the part of human deliberation that is slow but accurate and precise, as discussed by Kahneman~\cite{kahneman2011thinking}. This is distinguished from the original \textbf{reactive policy}, which can be described as \textbf{System~I}, the decision-making process that is fast but prone to biases and errors.

In simulative reasoning, the agent proposes candidate actions, predicts their consequences through the world model, evaluates goal progress, and selects the best action, performing thought experiments computationally with controllable depth and breadth. 
Note that this process need not be programmed using traditional search algorithms (e.g., DFS, MCTS), but can be absorbed by the inference procedure of an end-to-end system
in which the policy, world model, and other modules exchange activations under structured attention patterns (\S\ref{subsec:the-gic-arch}), while each remains trained under its own objective.
Plans generated through this process $c_t$ (Equation~\ref{eq:simulative-planning}) can be maintained in an associative memory, reducing redundant computation and preserving continuity of intent across steps. 
In practice, it is also possible to distill the results from System~II into System~I, opening up a credible path to training a stronger reactive policy when latency is a concern. 
The question of \emph{when} to invoke simulative reasoning vs.\ acting directly is itself a decision that should be governed by the agent, which we discuss next. 

\subsection{Self-Regulation: From Fixed Workflows to Learned Configurators}
\label{subsec:critique-self-regulation}


\begin{displayquote}
\textit{Either expect effective deliberation to emerge from unconstrained RL, 
or prescribe it through fixed workflow stages 
-- neither lets the agent regulate its own reasoning.}
\end{displayquote}

Given that both reactive action (System~I) and simulative reasoning (System~II) are available, a second question arises as to \emph{how} to decide which decision mode to engage. Different situations call for different amounts and types of internal computation, depending on urgency, difficulty, uncertainty, and resource budget. Current practice address this question in one of two ways, neither of which is satisfactory. 

The first approach is to expect effective deliberation patterns to emerge from unconstrained chain-of-thought during RL training (e.g., DeepSeek-R1). Within this paradigm, however, there is no explicit control for when the model will perform slow, deliberate planning vs. fast, instinctive reacting, nor bound over inference-time compute or reasoning budget. As a result, reasoning compute can increase dramatically during training, while longer reasoning does not necessarily yield better answers~\cite{gema2025inverse,su2025underthinking}. Effort to control reasoning cost has resulted in ``adaptive thinking models'' (e.g., GPT-5~\cite{openai2025gpt5}, Opus-4.7~\cite{anthropic2026claudeopus47}) which receive mixed reviews from end users~\cite{newton2025gpt5Backlash,digitaltoday2026opus47Backlash}.  

The second approach is to build planning into a fixed, externally prescribed stage of the workflow. Examples include human-controlled planning-execution pipelines (e.g., plan mode in Claude Code), scripted reasoning loops (e.g., CRISPR-GPT~\cite{qu2025crispr}), and always-on model-predictive control (MPC as advocated by LeCun~\cite{lecun2022path}). While more structured and amenable to customization and injection of domain expertise, these approaches introduce their own limitations. 
Fixed planning stages and reasoning pipelines force expensive deliberation even when direct action suffices. 
MPC, in particular, must replan from scratch at each step, losing continuity of intent and incurring high computational overhead.
Moreover, MPC's fixed planning horizon is fundamentally limited: as we show formally in Theorem~\ref{thm:horizon-requirement-mpc}, the required simulation horizon $H$ grows significantly with higher desired planning precision.

\begin{theorem}[Horizon Requirements for Pure $H$-step MPC in the World Model]
\label{thm:horizon-requirement-mpc}
Let $f$ be the world model with transition kernel $p_f(s' \mid s, a)$, let $\pi^*$ denote the optimal policy acting in $f$, namely $\pi^* \coloneq \argmax_{\pi} V^g_{\pi,f}$,
and let $C_g: \mathcal{S} \to [0, C_\text{max}]$ be a cost function. Given planning horizon $H \geq 1$ and assuming the discount schedule $\gamma_k = \gamma^{k-t}$ for $\gamma \in (0, 1)$, consider a $H$-step MPC policy which, given state $s_t$, simulates up to time step $T = t + H$ for decision-making as below:
\begin{equation}
    \pi^H_{\text{MPC}}(s_t) = \argmin_{a_t, \dots, a_{T-1}} \sum_{s_{t+1}:s_{T}} \left[ \sum_{k=t}^T \gamma^{k-t} C_g(s_k)  \right] \prod_{i=t}^{T-1} p_f(s_{i+1} \mid s_i, a_i).
\end{equation}
Assume the cost function is perfectly aligned with the original goal reward, meaning there exists a goal-dependent constant $b_g$ such that $C_g(s) = b_g - r(s, g)$. Then, given $\epsilon > 0$, to achieve $\lVert V^g_{\pi^*,f} - V^g_{\pi^H_\text{MPC}, f} \rVert \leq \epsilon$, it suffices that: 
$$
H = O\left(\frac{1}{1 - \gamma} \left[ \log\frac{1}{\epsilon} + 2 \log\frac{1}{1 - \gamma} + \log C_{\text{max}} \right] \right).
$$
If $\gamma$ and $C_{\text{max}}$ are treated as constants, then:
$$
H = O\left(\log \frac{1}{\epsilon} \right).
$$
\end{theorem}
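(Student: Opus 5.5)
We prove this by the standard receding-horizon argument: bound the sub-optimality of $H$-step MPC by the discounted tail mass beyond the planning horizon, then solve for $H$.

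\textbf{Step 1: rewrite the objective.} Since $C_g(s)=b_g-r(s,g)$, minimizing the truncated discounted cost over $H$ steps is the same as maximizing the truncated discounted reward $\sum_{k=t}^{T}\gamma^{k-t} r(s_k,g)$. The constant $b_g\sum_{k=t}^{T}\gamma^{k-t}$ does not depend on the actions, so it drops out. Hence $\pi^H_{\text{MPC}}$ is exactly the optimal $H$-step open-loop plan in $f$ with zero terminal value, and we can work entirely with costs.

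\textbf{Step 2: compare true and truncated objectives along any plan.} Fix a start state $s$ and an action sequence (or policy) $\sigma$. Let $J_\infty(\sigma)$ be the infinite discounted cost in $f$ and $J_H(\sigma)$ its truncation at $T=t+H$. Because $0\le C_g\le C_{\max}$,
\[
0\;\le\; J_\infty(\sigma)-J_H(\sigma)\;\le\;\sum_{k>H}\gamma^{k}C_{\max}\;=\;\frac{\gamma^{H+1}C_{\max}}{1-\gamma}.
\]
Take $\sigma_H$ to be the MPC minimizer of $J_H$ and $\sigma^*$ the optimal policy $\pi^*$. Then
\[
J_\infty(\sigma_H)\le J_H(\sigma_H)+\delta_H\le J_H(\sigma^*)+\delta_H\le J_\infty(\sigma^*)+\delta_H,\qquad \delta_H:=\frac{\gamma^{H+1}C_{\max}}{1-\gamma}.
\]
If the minimization in the MPC definition is closed-loop, so that $\sigma^*$ is admissible, this holds directly. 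If it is open-loop, we compare against a closed-loop truncated problem, since the statement's sum over $s_{t+1:T}$ with fixed actions is open-loop in form. This gives a one-step greedy error of $\delta_H$ relative to the truncated value function $W_H(s)=\min_\sigma J_H(\sigma)$, which satisfies $\lVert W_H - V^*_{\text{cost}}\rVert_\infty\le\delta_H$.

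\textbf{Step 3: propagate the per-step error.} Executing only the first MPC action and replanning at every step makes MPC a policy that is greedy with respect to $W_{H-1}$ (one step of lookahead plus an $(H-1)$-step continuation). The standard greedy-policy bound (Singh--Yee / Bertsekas) gives
\[
\lVert V^*-V^{\pi^H_{\text{MPC}}}\rVert_\infty\;\le\;\frac{2\gamma}{1-\gamma}\lVert W_{H-1}-V^*\rVert_\infty\;\le\;\frac{2\gamma^{H}C_{\max}}{(1-\gamma)^2}.
\]
This is where the $\log\frac{1}{(1-\gamma)^2}$ term comes from. Converting back from costs to rewards is exact because of the affine relation in Step 1.

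\textbf{Step 4: solve for $H$.} Set $\frac{2\gamma^{H}C_{\max}}{(1-\gamma)^2}\le\epsilon$. This requires
\[
H\ge\frac{\log\!\big(2C_{\max}/(\epsilon(1-\gamma)^2)\big)}{\log(1/\gamma)}.
\]
Using $\log(1/\gamma)\ge 1-\gamma$ yields the stated $O\!\big(\frac{1}{1-\gamma}[\log\frac1\epsilon+2\log\frac1{1-\gamma}+\log C_{\max}]\big)$. With $\gamma$ and $C_{\max}$ treated as constants, this is $O(\log\frac1\epsilon)$.

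\textbf{Main obstacle.} The delicate step is Step 3: correctly handling the receding-horizon execution, and the open- versus closed-loop reading of the $\argmin$. I would first try the clean greedy-with-respect-to-$W_{H-1}$ argument under the closed-loop interpretation. For the literal open-loop formula, I would note that its value is at least as good as that of any fixed open-loop sequence, and bound against the closed-loop truncated optimum with an extra slack. The slack may need an additional assumption (e.g., deterministic $f$) for the bound to go through as written.
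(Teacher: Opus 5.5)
Your proposal is correct and takes essentially the same route as the paper. Both arguments shift costs to rewards via $C_g = b_g - r$, treat $\pi^H_{\text{MPC}}$ as greedy with respect to the zero-terminal truncated value, and apply the $\frac{2\gamma}{1-\gamma}$ approximate-greedy bound, which the paper proves rather than cites. Both then bound the truncation error by roughly $\gamma^{H-1}C_{\max}/(1-\gamma)$ and solve for $H$; the paper gets this via Bellman contraction on $\lVert \tilde{T}^{H-1}\tilde{V}^* - \tilde{T}^{H-1}0\rVert_\infty$, and your tail-sum argument is equivalent.

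The open- versus closed-loop caveat you raise is a real one. For stochastic $f$, the literal open-loop $\argmin$ need not be greedy with respect to any truncated value. The paper's proof does not resolve this either: its Step~2 simply asserts the greedy identity, so it tacitly uses the closed-loop (or deterministic-$f$) reading you propose.
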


\begin{explanation}
    Pure MPC can reduce planning error by increasing the lookahead horizon, but the required simulation depth increases significantly with precision demands; the cost becomes increasingly demanding for precise planning, let alone running it for every decision with a fixed planning horizon $H$.
\end{explanation}

\begin{proof}[Proof Sketch]
Because the cost function is perfectly aligned with reward, minimizing cost is equivalent to maximizing the shifted reward $\tilde{r}(s, g) = -C_g(s) = r(s, g) - b_g$, which does not change the optimal policy or value gap we want to bound. 
Let $\tilde{T}$ be the Bellman operator under $\tilde{r}$, where applying $\tilde{T}$ once means looking one step ahead and then using a continuation value. 
Pure $H$-step MPC policy $\pi^H_{\text{MPC}}$ can then be viewed as acting greedily with respect to the finite-horizon estimate $\hat{V}^{(H-1)} = \tilde{T}^{H-1} 0$, namely rolling out for $H$ steps and assigns zero value to the unplanned future. 
By standard approximate-greedy bound, its suboptimality is controlled by $\lVert \tilde{V}^* - \hat{V}^{(H-1)} \rVert_\infty$. Bellman contraction gives $\lVert \tilde{V}^* - \tilde{T}^{H-1} 0 \rVert_\infty \leq \gamma^{H-1} \lVert \tilde{V}^* \rVert_\infty$, and bounded cost implies $\lVert \tilde{V}^* \rVert_\infty \leq C_{\text{max}} / (1 - \gamma)$. 
Combining these yields $\lVert V^g_{\pi^*,f} - V^g_{\pi^H_{\text{MPC}}} \rVert \leq 2 \gamma^H C_{\text{max}} / (1 - \gamma)^2$, so achieving error at most $\epsilon$ requires $H$ large enough that the derived bound is below $\epsilon$.
\end{proof}

\begin{figure}
    \centering
    \includegraphics[width=0.5\linewidth]{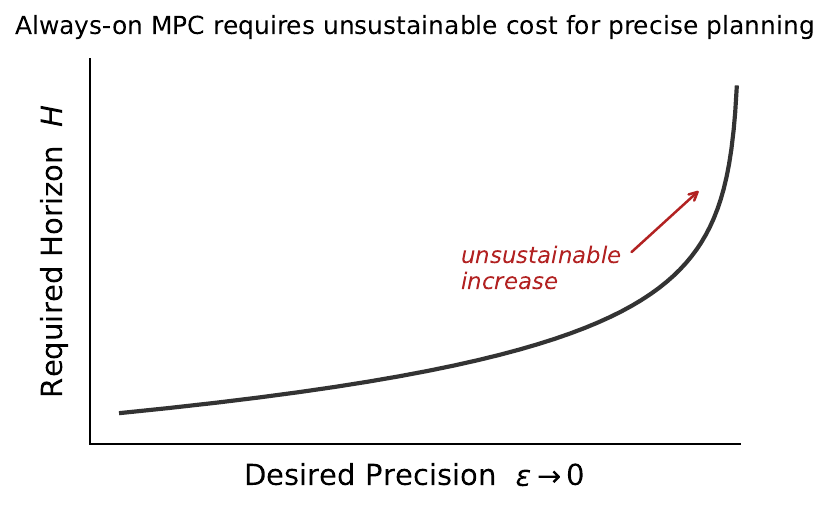}
    \caption{\small As the desired planning precision increases ($\epsilon \to 0$ as per Theorem~\ref{thm:horizon-requirement-mpc}), the required planning horizon $H$ grows significantly. For an always-on, fixed-depth MPC routine, this means that any choice of horizon is either too shallow to achieve the target precision or too deep to be computationally feasible at every timestep. This motivates moving beyond always-on planning toward approaches that allow the agent to decide for itself when and how deeply to deliberate. }
    \label{fig:horizon-vs-precision}
\end{figure}

As Theorem~\ref{thm:horizon-requirement-mpc} and Figure~\ref{fig:horizon-vs-precision} show, increasing the desired planning precision ($\epsilon \to 0$) results in increasing demands on the planning horizon $H$. In particular, always-on, fixed-depth MPC commits to a uniform planning procedure at every decision point, which results in overplanning in easy states where simple reactive policy suffices, and underplanning in difficult or high-stakes states that require deep and detailed simulation. 
Fundamentally, neither scripted pipeline nor fixed MPC allows the agent to decide \emph{for itself} when and how deeply to deliberate, effectively externalizing another dimension of agency that should have been internal to the agent.

\begin{figure}[t]
    \centering
    \includegraphics[width=\linewidth,page=2]{figures-cam/3.1-critiques-goal.pdf}
    \caption{\small \textbf{Comparison of model-predictive control (MPC) and self-regulated simulative reasoning (System~III + System~II).} (\textbf{Left}) MPC applies a fixed-depth planning tree of horizon $H$ at every decision step, regardless of situation difficulty. Plans are discarded and rebuilt from scratch at each step, resulting in overplanning during routine situations and underplanning during critical ones. (\textbf{Right}) A learned configurator $\kappa$ decides whether to make new plan via simulative reasoning (System~II), continue an existing plan, react directly (System~I), or run other routines (e.g., learning). Previous plans are cached in associative memory and available for reference.
    This allows the agent to invest computation where it matters while avoiding the uniform overhead and discontinuous intent of always-on planning.
    }
    \label{fig:self-regulation}
\end{figure}

The constructive alternative is a learned \textbf{configurator} $\kappa$, formalized in \S\ref{sec:boundary} and illustrated in Figure~\ref{fig:self-regulation}, which outputs a regulation decision $u_t$ at each step that governs the agent's deliberative mode: construct a new simulative plan, continue or revise an existing one, or skip planning entirely and act directly. Both Systems~I and II are needed for human-level agency; what matters is that the agent itself selects the appropriate mode based on urgency, difficulty, uncertainty, and resource budget. As the configurator models the meta-cognition that dynamically switches between these two systems, we analogously refer to this process as \textbf{System~III}. 
The configurator itself should be trained (e.g., via RL) as \underline{part of the agent's policy} to maximize task success while managing computational expenditure, and can adapt its regulation strategy with experience. As such, the meta-decision-making may also be enhanced by simulative reasoning using the world model $p_f(s_{t+1} \mid s_t, u_t)$ which predicts the abstracted consequence after adopting a specific deliberation mode.

In practice, the regulation variable $u_t$ may encode more nuanced decisions, such as choosing not to pursue certain subgoals or take certain actions. Indeed, from a safety perspective, certain behaviors considered objectionable in general may be critical to safety in other scenarios (discussed in more detail in \S\ref{sec:am-safety}). For instance, crossing a room at a leisurely pace vs. sprinting to retrieve an epipen for someone with a life-threatening allergic reaction involves the same motor system but entirely different configurations: in the latter, knocking objects aside becomes acceptable, social norms about running indoors are suspended, and physical cost to oneself is discounted. 
Self-regulation, in this view, functions not merely as a computation scheduler, but also like human emotion: configuring behavioral modes that structure the agent's priorities and action repertoire based on situational assessment. 
The configurator also plays a role in deciding when and how the agent should learn from experience, as we discuss next. 

\subsection{Learning: From Human-Designed Pipelines to Self-Directed, Simulative Improvement}
\label{subsec:critique-learning}


\begin{displayquote}
\textit{Train the agent through human-designed pipelines (e.g., RL 
in simulators, supervised demonstration), and deploy a frozen checkpoint
-- does not allow the agent to govern its own learning.}
\end{displayquote}

Current approaches to training agents cluster around three main positions. 
The first trains the policy via RL in rule-based simulators or ``digital twins'' for cheap scalability, easy reversibility, and safe trial-and-error. Examples include code-based 3D simulators from MoonLake AI (supported by Manning and Goodfellow~\cite{manning2026efficient}) and exported assets from 3D-scene models (e.g., World Labs supported by Fei-Fei Li~\cite{worldlabs2025marble}). 
The second trains in the real environment with supervised correction, arguing that no simulator yet matches reality, a position championed by Levine~\cite{levine2025sporks}. 
The third, advocated most prominently by LeCun~\cite{lecun2022path}, argues that training a world model (WM) via self-supervision is sufficient, and that learning a separate policy through RL is inefficient and unnecessary. 
Each of these positions captures an important aspect of the training problem.
However, they share a common structural property: in all three cases, training is treated as a finite phase, scheduled, curated, launched, and monitored by human engineers, that terminates before deployment. 
We argue below that this shared assumption leaves significant room for a more complete treatment of agency.


\paragraph{Program as Simulator vs. Model as Simulator.} Rule-based simulators (e.g., MoonLake AI and World Labs) have demonstrated impressive results within their target domains, but as computer programs, they are inevitably bounded by the scope of 3D engineering and the ability to analytically model every nuance of the real world.
An AI-driven WM (e.g., JEPA~\cite{assran2025vjepa2selfsupervisedvideo} from AMI and GLP~\cite{xiang2025pan} from IFM), however, is fundamentally different from a hand-crafted digital twin or a metaverse, due to its use as a simulator built through data-driven machine learning. 
Given appropriate architecture and sufficient data, a learned simulator can converge towards accurate simulation of real-world dynamics in a way no hand-engineered program can match in general. 
The distinction is analogous to the shift from hand-crafted features to learned representations in computer vision (e.g., AlexNet~\cite{krizhevsky2012imagenet}) -- what changed was not the problem, but the recognition that \emph{learning} scales where engineering does not. 

\paragraph{Simulation-First, Reality as Validation.}
An influential perspective (e.g., as articulated by Levine) treats reality as the primary training arena and simulation as a supplement.
But for many domains (e.g., climate intervention, drug discovery, aerospace missions, military conflicts), real-world trial-and-error is dangerous, expensive, or irreversible. 
Just as one would not put a pilot in a real plane on their first day, the machine should follow the inverted principle: \textbf{simulate first, use reality as validation}. Specifically, the agent should learn primarily from its world model as a simulator, and then use real interaction to validate and calibrate the simulator, not as the default learning environment. 
This principle is not merely an engineering convenience, but also has formal support. As we prove formally in Theorem~\ref{thm:mixture-beats-real-only}, given a fixed budget of real experience, augmenting it with world-model-simulated experience yields policies 
with a good chance of outperforming the real-only policy, even if the WM is not perfect.
When the world model is perfect, the mixture dominates with certainty. 
\begin{theorem}[Mixture of simulated and real experience outperforms real-only experience for training agents, up to world-modeling error terms]
    \label{thm:mixture-beats-real-only}
    Given a fixed dataset of real experience collected from the true environment $\mu$: $D_{\mu} = \{ (s, a, s', r') \}_{i=1}^{N_\mu}$,
    define two hypothesis sets of policies computable from the interaction budget $D_\mu$:
    \begin{itemize}
        \item $\Pi_{\text{env}}(D_\mu)$: All policies that can be computed using only $D_\mu$, namely experience from the real environment.\footnote{Note that no limitation is placed on the nature of the experience nor the learning method: $D_\mu$ may be either an offline demonstration dataset or an experience buffer collected through on-policy exploration, and the policy may consume the experience through either imitation learning or other reinforcement learning algorithms.}
        \item $\Pi_{\text{mix}}(D_\mu, D_f)$: All policies that can be computed using a mixture $M_\alpha = (1 - \alpha) \mu + \alpha f$ of the real experience $D_\mu$ and simulated rollouts $D_f = \{(s, a, s', r')\}_{i=1}^{N_f}$ from the world model $f$.
    \end{itemize}
    Further define the best-possible policy given only real experience $\pi^*_\text{env}$ and given the mixture experience $\pi^*_\text{mix}$, respectively, as below:
    \[
    \pi^*_\text{env} = \argmax_{\pi \in \Pi_{\text{env}}(D_\mu)} V^g_{\pi,\mu}, \qquad \pi^*_\text{mix} = \argmax_{\pi \in \Pi_{\text{mix}}(D_\mu, D_f)} V^g_{\pi,M_\alpha}.
    \]
    Then, the following inequality holds:
    $$
    V^g_{\pi^*_\text{mix}, \mu} \geq V^g_{\pi^*_{\text{env}}, \mu} - 2 C(\gamma, R_\text{max}) \alpha \epsilon,
    $$
    with $V^g_{\pi^*_\text{mix}, \mu} \geq V^g_{\pi^*_{\text{env}}, \mu}$ when the world model $f$ is perfect ($\epsilon_f = 0$).
\end{theorem}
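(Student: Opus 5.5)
The plan is the standard ``optimize in a surrogate, transfer to reality'' argument, using the Simulation Lemma already invoked in the proof of Theorem~\ref{thm:world-model-improve-policy}. The first step is to make the mixture environment $M_\alpha = (1-\alpha)\mu + \alpha f$ precise as a transition kernel,
\[
p_{M_\alpha}(s'\mid s,a) = (1-\alpha)\,p_\mu(s'\mid s,a) + \alpha\, p_f(s'\mid s,a).
\]
I then bound its distance to reality. For every $(s,a)$, the total variation is convex in its first argument and vanishes at $p_\mu$, so
\[
\text{TV}\!\left(p_{M_\alpha}(\cdot\mid s,a),\,p_\mu(\cdot\mid s,a)\right) \le \alpha\,\text{TV}\!\left(p_f(\cdot\mid s,a),\,p_\mu(\cdot\mid s,a)\right) \le \alpha\epsilon .
\]
Applying the Simulation Lemma with geometric discounting and rewards bounded by $R_\text{max}$ then gives, for \emph{every} policy $\pi$,
\[
\bigl|V^g_{\pi,M_\alpha} - V^g_{\pi,\mu}\bigr| \le C(\gamma,R_\text{max})\,\alpha\epsilon, \qquad C(\gamma,R_\text{max}) = \frac{2\gamma R_\text{max}}{(1-\gamma)^2}.
\]
This is the constant used in the earlier sketch. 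Here $\epsilon$ and $\epsilon_f$ both denote the TV bound of $f$.

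The second step is a hypothesis-class inclusion, $\Pi_{\text{env}}(D_\mu) \subseteq \Pi_{\text{mix}}(D_\mu, D_f)$. Any learner that uses only $D_\mu$ is also a learner on $D_\mu \cup D_f$ that simply ignores the simulated rollouts. The footnote's ``no restriction on the learning method'' is what licenses this. It follows that $\pi^*_\text{env}$ is feasible for the mixture problem, so $V^g_{\pi^*_\text{mix},M_\alpha} \ge V^g_{\pi^*_\text{env},M_\alpha}$.

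The final step is the usual three-term chain:
\[
V^g_{\pi^*_\text{mix},\mu} \;\ge\; V^g_{\pi^*_\text{mix},M_\alpha} - C\alpha\epsilon \;\ge\; V^g_{\pi^*_\text{env},M_\alpha} - C\alpha\epsilon \;\ge\; V^g_{\pi^*_\text{env},\mu} - 2C\alpha\epsilon .
\]
The first and third inequalities are the transfer bound from the first step, applied to $\pi^*_\text{mix}$ and $\pi^*_\text{env}$ respectively. The middle inequality is the optimality of $\pi^*_\text{mix}$ over the larger class. When $\epsilon_f = 0$, the kernel $M_\alpha$ coincides with $\mu$, both error terms vanish, and domination is exact. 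That is the stated special case.

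The main obstacle is the interpretive step in the second paragraph, not any calculation. For the inclusion to hold, the hypothesis sets have to be read as sets of \emph{computable policies}, and $\pi^*_\text{mix}$ has to be defined as optimal for $V_{\cdot,M_\alpha}$ rather than for $V_{\cdot,\mu}$. Under a stricter reading, in which each class is tied to a particular algorithm run on a particular data distribution, the inclusion could fail. I would therefore state the inclusion explicitly as the modeling assumption that makes the argument go through. A secondary point to check is that the Simulation Lemma applies when the environment is a mixture whose error is only bounded per state–action pair. It does, since the lemma requires only a uniform per-$(s,a)$ TV bound, and the first step supplies exactly that with $\alpha\epsilon$.
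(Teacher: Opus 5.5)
Your proposal is correct and matches the paper's proof step for step. Both use the Simulation Lemma transfer bound $\lvert V^g_{\pi,M_\alpha}-V^g_{\pi,\mu}\rvert\le C(\gamma,R_\text{max})\,\alpha\epsilon$, the inclusion $\Pi_{\text{env}}(D_\mu)\subseteq\Pi_{\text{mix}}(D_\mu,D_f)$ to get $V^g_{\pi^*_\text{mix},M_\alpha}\ge V^g_{\pi^*_\text{env},M_\alpha}$, and the same three-term chain. Your derivation that the mixture kernel lies within $\alpha\epsilon$ of $p_\mu$ in total variation fills in a step the paper only asserts; the bound is in fact an equality to $\alpha\,\text{TV}(p_f,p_\mu)$, since $p_{M_\alpha}-p_\mu=\alpha(p_f-p_\mu)$.
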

\begin{explanation}
    If the agent has access to both real experience and simulated experience from a world model, then the best policy it can learn 
    has a good chance of outperforming the best policy learned from real experience alone, with the chance tied to the world model's accuracy. With a perfect world model, the mixture dominates with certainty.
\end{explanation}
\begin{proof}[Proof Sketch]
First, the mixed-experience policy class contains the real-only policy class (i.e., $\Pi_{\text{env}}(D_\mu) \subseteq \Pi_{\text{env}}(D_\mu, D_f)$), since a learner with access to both real and simulated experience can always ignore the simulated data. Therefore, the best mixture-trained policy $\pi^*_{\text{mix}}$ must achieve at least as much value as the best real-only policy $\pi^*_{\text{env}}$, when both are evaluated in the mixed environment $M_\alpha$. 
Second, by the Simulation Lemma, evaluating any fixed policy in $M_\alpha$ instead of the true environment $\mu$ introduces at most $C(\gamma, R_\text{max}) \alpha \epsilon$ value error. Applying this error bound once to transfer $\pi^*_\text{mix}$'s value from $M_\alpha$ back to $\mu$ and once to transfer $\pi^*_\text{env}$'s value from $\mu$ to $M_\alpha$, giving a total penalty of $2C(\gamma, R_\text{max}) \alpha \epsilon$. When the world model is perfect, the simulation error $\epsilon$ is zero, resulting in domination by $\pi^*_\text{mix}$.
\end{proof}
The detailed proof can be found in Appendix~\ref{appendix:mixture-beats-real-only}.
In contrast with mixed-experience training, real-world-only training, while grounding the agent in true dynamics, is insufficient for tasks that are unsafe, expensive, or slow to provide feedback.
In particular, PAN~\cite{xiang2025pan} emerges as an example of a WM that can support general simulative learning as discussed above. Built on the generative latent prediction (GLP) architecture, PAN is trained to support open-domain, action-conditioned simulation with coherent, long-term dynamics. One particular advantage of PAN compared to latent-only WMs (e.g., V-JEPA 2~\cite{assran2025vjepa2selfsupervisedvideo}) is its ability to decode simulation back to observation space (e.g., videos) for collaboration with a wide range of downstream systems (e.g., vision-language, robotic, and autonomous-driving models), as recently argued in the debate on world models between Xing and LeCun~\cite{lecun_xing_world_models_2026}.

\paragraph{Learning to Predict vs. Learning to Act.}
Training a WM through self-supervision is necessary but, as we argue, not by itself sufficient. 
Self-supervised learning (SSL) produces a WM capable of next-state prediction, which is valuable as a substrate for simulative reasoning (\S\ref{subsec:critique-decision-making}) and provides a learned simulator for generating training experience (Theorem~\ref{thm:mixture-beats-real-only}).
However, the WM predicts what \emph{will happen}; the AM decides what \emph{to do}. 
No amount of SSL produces an agent that decomposes goals, evolves identity, configures decision modes, and selects actions to maximize long-term goal success, any more than a perfect flight simulator produces a trained pilot. 
As discussed in \S\ref{subsec:critique-self-regulation}, relying on MPC to bridge the prediction--action gap faces fundamental horizon limitations (Theorem~\ref{thm:horizon-requirement-mpc}). RL thus remains essential not as a refinement step on top of SSL, but as the paradigm that trains the AM to act effectively \emph{within} and \emph{through} the WM, never \emph{as} the WM.

This can be seen as an instance of the broader conflation of world model and agent model discussed in \S\ref{subsec:agent-environment-model}.
Recent work~\cite{ye2026dreamzero,li2026functional_taxonomy_world_models,nvidia2026cosmos3} labels action generation as part of the WM’s capability and trains joint world-action architectures. 
Such integration is a legitimate engineering choice for end-to-end optimizability, but can obscure a functional distinction between WMs trained for next-state prediction and AMs trained for reward maximization.
When the WM's predictions are supervised by a reward-maximizing objective, the model is biased towards optimistic states that, without complex heuristics (e.g., realism penalties, advantage weighting, hyperparameter selection), can be easily exploited by the policy for degraded performance in practice, an insight well-documented in model-based RL~\cite{eysenbach2022mismatched,mete2026optimistic}.
The separation we advocate therefore operates at three levels: \emph{function} (next-state prediction vs. action selection) and \emph{training objective} (prediction loss vs. reward) must always be kept distinct, while \emph{architecture} remains free to integrate the two models end-to-end, as we show in \S\ref{subsec:the-gic-arch}.

\paragraph{External Learning Schedule vs. Internally Regulated Learning}
In current approaches~\citep[e.g.,][]{slime_github,cadene2024lerobot}, when to learn, what data to use, and when to stop are decisions made by human engineers, not by the agent. This not only exogenizes a core aspect of genuine agency, but also risks replacing the long-term potential of goal-oriented learning with the short-term convenience of manual engineering. 
The constructive alternative treats learning as perpetual and self-directed. The agent should govern its own learning process, deciding when to execute in the environment, when to retreat into simulation for practice, when to update the world model from recent experience, and when to revise its self-model. 
In the fully realized vision, \textbf{perpetual learning} takes two complementary forms. The first is learning through real interaction: working on problems changes the agent's internal decision-making structure, not just produces outputs. This is fundamentally different from typical ``reflection'' mechanisms that generate self-evaluative text but leaves the agent's parameters untouched~\cite{shinn2023reflexion}. 
The second is learning through imagined experience: when not actively engaged in the real world, the agent uses its world model to generate hypothetical scenarios and learns from them (i.e., RL from a simulated world), requiring no real-world interaction at all. 
An agent that interleaves execution and self-improvement in this way is qualitatively different from one that is frozen after deployment.

\subsection{Summary: Agent Model \emph{with} World Model}
\label{subsec:critique-summary}

The common thread across the critique above is that current systems externalize the structures of agency (i.e.,
goals, identity, decision-making, self-regulation, and learning) into human-engineered scaffolding. 
A truly agentive system possessing endogenous artificial agency requires that
each dimension in question points toward the same constructive alternative: \emph{internalizing} these structures within a unified learned model. 

Furthermore, every constructive alternative, as has emerged from the discussion, relies on or benefits from the agent's ability to simulate reality internally. 
Goal decomposition requires predicting consequences to assess the feasibility and ordering of subgoals. Identity evolution requires simulating one's own performance to revise self-assessment. Decision-making requires predicting state transitions to ground counterfactual reasoning. Self-regulation requires assessing situational difficulty and urgency to select the appropriate behavioral mode. And learning requires a learned simulator to generate experience faithfully, safely, and at scale. 

The \textbf{world model} thus emerges not as one component among many, but as the connective substrate through which the other dimensions of agency become possible.
As argued in \cite{xing2025critiques}, building a general-purpose learned simulator of the world is not merely an engineering component of agent design, but a goal of AI in its own right --- a system that, given the right architecture and sufficient data, can converge toward faithful simulation of diverse real-world dynamics. Agents are the way to extract value from such a simulator: the relationship between the agent and the world model is analogous to that between a pilot and a flight simulator, where the simulator provides the substrate for both reasoning and learning, and the agent provides the intentionality that turns simulation into purposeful action. 


This convergence motivates the architecture we present next: a unified \textbf{agent model} in which goal decomposition, identity evolution, simulative reasoning, self-regulation, and self-directed learning arise as components of a single adaptive system, 
paired with a separately learned world model that the agent consults as its internal simulator in planning and its arena for continuous improvement.

\section{The GIC Agent Model}
\label{sec:gic-agent-model}


The critique in \S\ref{sec:critiques} converges on six design requirements for  
achieving capability akin to that of genuine agency in an agentive artificial system:
\textbf{persistent goals} with hierarchical decomposition;
\textbf{evolving identity} that updates with experience;
\textbf{simulative reasoning} through an internal world model;
\textbf{self-regulation} via a learned configurator; and
\textbf{self-directed learning} from both real and simulated experience.
Meeting these requirements calls for a single learned model that generates distributions over actions conditioned on world state, goals, identity, and plans. This is not merely predicting the next token in a sequence, but simulating the full distribution of possible actions and their consequences, parallel to the world model's simulation of possible worlds~\cite{xing2025critiques}.
We refer to such a model as an \textbf{Agent Model} (AM). 
In this section, we present \textbf{Goal-Identity-Configurator} (GIC), an architecture for agent models, and describe its training, deployment, evaluation, data requirements, and safety properties.
Details and preliminary results for specific, scaled-down instantiations shall appear in companion manuscripts~\citep[e.g.,][]{deng2026generalagenticplanningsimulative,deng2026sr2am}.

\subsection{A Motivating Use Case: Training an Aircraft Pilot}

A truly versatile and autonomous agent model must handle the full complexity of real-world behavior: variations in modality (e.g., verbal, visual, proprioceptive, tactile), temporal scope (e.g., split-second reflexes to multi-day campaigns), action granularity (e.g., fine motor control to strategic decisions), and social structure (e.g., solo operation to coordinated teams).
We therefore ground our discussion in a more demanding use case: the training and deployment of an aircraft pilot, which naturally stages every component of the agent model across a developmental arc.

\paragraph{Ground School}
The process begins with classroom learning (manuals, regulations, meteorology, aerodynamics) that builds an internal world model of flight physics and procedures. 
Extensive browsing of book knowledge (e.g., philosophy, cultural stories) builds the vocabulary for abstract concepts (e.g., ideology, loyalty, values, and morality), while lack of operating experience leads to realistic self-awareness of skill level (e.g., ``I know the rules but have never flown.''). Both of these serve as the basis of future identity development.

\paragraph{Simulator Training}
In the flight simulator, the pilot builds reactive competence (System~I: e.g., stick-and-rudder coordination), deliberate planning (System~II: e.g., fuel management), and the ability to shift fluidly between modes (System~III). Identity in terms of skill awareness evolves (e.g., ``I can land in crosswinds but am weak on instrument approaches.''), while philosophical values are ingrained in response to task curriculum (e.g., learning when to prioritize mission and when to preserve oneself).

\paragraph{Real-Aircraft Deployment}
After simulator comes deployment to a real aircraft, which forces online adaptation to the sim-to-real gap (e.g., G-forces, vibration, fatigue, visual illusions) and goal decomposition (e.g., a cross-country flight into legs, waypoints, and altitude management). 
The pilot's identity in terms of skill odometer and personal values are challenged and calibrated by the real experience (e.g., maintaining composure in face of sudden engine stall).

\paragraph{Fleet Coordination}
Later, the pilot may join a fleet, where communication and coordination arise as task necessities (e.g., leading or following based on each pilot's model of teammates' capabilities) rather than external assignment.
The identity further develops to encompass new relationships and instilled team values.

\paragraph{Command}
At the strategic level, a pilot-turned-commander reasons over multi-day campaigns, logistics, adversaries, and personnel, planning across time scales and deciding which decisions to make personally and which to delegate.
In their leadership capacity, the commander may also play a role in shaping the identities of their subordinates through example, teaching, and organizational structures.

A single cognitive architecture underlies this entire trajectory. The challenge is building a model that supports it.


\subsection{The GIC Architecture}
\label{subsec:the-gic-arch}

\begin{figure}
    \centering
    \includegraphics[width=\linewidth,page=2]{figures-cam/x-gic-arch.pdf}
    \caption{\small The GIC Agent Model architecture, illustrated with the aircraft pilot use case. \textbf{(Bottom)} The universe emits observations and receives actions from the agent. \textbf{(Top)} The agent processes observations through a \emph{belief encoder} to form belief states, conditioned on an evolving \emph{identity} and hierarchically decomposed \emph{subgoals}. The \emph{configurator} (System~III) decides at each step whether to invoke the planner or act directly. When planning is invoked, the \emph{planner} (System~II) simulates candidate trajectories: the \emph{world model} predicts future states, the \emph{policy} proposes candidate actions, and the \emph{critic} evaluates expected long-term value. The best plan is executed through the agent's actions (System~I).}
    \label{fig:gic-architecture}
\end{figure}

The GIC architecture, as illustrated in Figure~\ref{fig:gic-architecture}, consists of six components, each handling a distinct aspect of agency. We describe them in turn.

\paragraph{Belief Encoder ($h$).}
The belief encoder maps the current observation $o_t$ to an internal belief state $\hat{s}_t \sim p_h(\cdot \mid o_t)$, representing the agent's best estimate of the world.
Specifically, as argued in \cite{xing2025critiques}, the belief state is neither just a continuous sensory embedding nor just a text description, but a mixture of discrete tokens (e.g., text) for encoding abstract concepts (e.g., computer code, morality, other agents' goals and capabilities) and continuous embeddings for perceptual details (e.g., fine-grained texture, joint angles)



\paragraph{Goal Decomposer ($\delta$).}
Given the belief state $\hat{s}_t$ and the agent's long-term goal $g$, the goal decomposer produces the active subgoal $g_t \sim p_\delta(\cdot \mid \hat{s}_t, g)$.
Subgoals are ordered by dependency and priority, and revisable as new information arrives.
For the pilot approaching an unfamiliar airport in poor visibility, for example, $\delta$ may decompose the mission into ``execute the instrument approach'' as the immediate subgoal.

\paragraph{Identity Evolver ($\iota$).}
The identity evolver updates the agent's self-model $i_t \sim p_\iota(\cdot \mid \hat{s}_t, i_{t-1})$, capturing capabilities, constraints, affordances, and relationships with other entities.
Identity adapts without retraining, analogous to how a professional revises self-assessment over a busy day without needing to ``rewire their brain.''
The same pilot, after a difficult approach in gusty winds, may revise downward the self-assessed confidence in visual techniques and/or reinforce their mission-driven values ($i_t$), leading to more conservative decisions in general but risk-taking behavior in critical situations going forward.


\paragraph{Configurator ($\kappa$) --- System~III.}
The configurator assesses the current situation and outputs a regulation decision $u_t \sim p_\kappa(\cdot \mid \hat{s}_t, g_t, i_t)$ governing the agent's deliberative mode: construct a new plan, continue or revise an existing one, or skip planning and act directly.
More broadly, it may route among internal capabilities including goal re-decomposition, identity revision, and retreating into learning.
As formalized in \S\ref{subsec:critique-self-regulation}, this learned meta-controller avoids both the waste of always-on planning and the brittleness of fixed workflows.


\paragraph{Simulative Planner ($\pi_f$) --- System~II.}
When planning is invoked, the planner constructs a plan $c_t \sim p_{\pi_f}(\cdot \mid \hat{s}_t, g_t, i_t, u_t)$ by proposing candidate actions, predicting their consequences through the \textbf{world model} $f$, evaluating goal progress through the critic $v$, and choosing the best one while accounting for prediction uncertainty.
The plan encodes a projected trajectory $c_t = (\hat{s}_t, a'_t, \hat{s}_{t+1}, a'_{t+1}, \dots, \hat{s}_T)$.
Predicted future states can be checked against subsequent observations to assess plan validity; planned actions guide execution when anticipated states are encountered 
or when the current state is highly uncertain (e.g., landing aircraft in poor visibility);
and the planning horizon is controllable, enabling hierarchical planning at multiple time scales.
Because simulative reasoning grounds decisions in predicted state transitions rather than pattern-matched responses, it enables \emph{generalizable planning}: the agent reasons about novel situations (e.g., behavior of other agents in shared environments) by composing the world model's predictive knowledge, rather than requiring demonstrations for every new task. As proven in Theorem~\ref{thm:world-model-improve-policy}, this capacity improves any baseline policy, provided the world model is reasonably accurate. 


\paragraph{Actor ($\alpha$) --- System~I.}
The actor selects 
action $a_t \sim p_\alpha(\cdot \mid \hat{s}_t, c_t)$, handling fine-grained reactive patterns that are difficult to encode in structured plans (e.g., the pilot's immediate stall recovery, the instinctive correction on a gust of wind).
In social environments, the actor's action space naturally extends to communicative actions directed at other agents, making multi-agent coordination an emergent consequence of the architecture, rather than requiring a separate mechanism. 

\paragraph{Integration: Three Decision-Making Systems.}
The interplay among these components can be understood through three systems:
\textbf{System~I} (reactive action via the actor $\alpha$) handles routine or urgent decisions where deliberation costs outweigh its benefits;
\textbf{System~II} (simulative planning via $\pi_f$) handles novel or high-stakes situations requiring counterfactual evaluation;
\textbf{System~III} (self-regulation via $\kappa$) governs which mode to engage, whether it be delegating to System~I during calm cruise, activating System~II when weather deteriorates, or rapidly sequencing both when an engine fails during takeoff.


Together, the agent's action distribution decomposes as:
\begin{align}
p_{\text{GIC}}(a_t \mid o_t, g, i_{t-1}) = \sum_{\substack{g_t,\, i_t \\ u_t,\, c_t}} 
&\underbrace{p_\alpha(a_t \mid \hat{s}_t, c_t)}_{\scriptsize \shortstack{actor \\ (System~I)}}\,
\underbrace{p_{\pi_f}(c_t \mid \hat{s}_t, g_t, i_t, u_t)}_{\scriptsize \shortstack{planner \\ (System~II)}}\,
\underbrace{p_\kappa(u_t \mid \hat{s}_t, g_t, i_t)}_{\scriptsize \shortstack{configurator \\ (System~III)}} \\
&\underbrace{p_\iota(i_t \mid \hat{s}_t, i_{t-1})}_{\scriptsize \shortstack{identity\\evolution}}\,
\underbrace{p_\delta(g_t \mid \hat{s}_t, g)}_{\scriptsize \shortstack{goal\\decomposition}}\,
\underbrace{p_h(\hat{s}_t \mid o_t).}_{\scriptsize \shortstack{belief \\ encoder}}
\label{eq:gic-agent-model}
\end{align}
This decomposition defines the variable structure but does not prescribe how each component reasons internally. 
Note that in Equation~\ref{eq:gic-agent-model}, the world model $f$ appears only as the simulator that the planner $\pi_f$ queries, but is not one of its factors. 
The six components above constitute the agent model, with input--output signatures defined over observations, goals, identity, and actions, and are trained to act. The world model $f$ is trained separately on next-state prediction alone, and no gradient from the agent's reward objective flows into its parameters (\S\ref{subsec:critique-learning}). The agent model thus \emph{consults} the world model rather than containing it.
This separation, however, does not preclude the world model and the agent model from working together in a single end-to-end system: while their parameters are disjoint, each set may be updated only by its own objective, and the coupling occurs exclusively through exchange of activations and outputs. 
GIC thus demonstrates that the architectural integration motivating recent joint world-action generators~\citep[e.g.,][]{ye2026dreamzero,nvidia2026cosmos3} is fully compatible with maintaining the functional and training separation on which sound diagnosis and safety analysis depend.

Furthermore, the conditional independence structure among GIC's variables (e.g., the actor depends on the current plan but not on the raw goal; the planner depends on belief state, goal, and identity but not on the configurator's internal state) suggests that structured attention patterns reflecting these graphical constraints may preserve accuracy while substantially reducing computational overhead compared to flat, full-attention architectures.
While the formulation shows a single configurator decision $u_t$ per step, it generalizes to iterative refinement through multiple rounds. Overall, GIC represents a general-purpose architecture for generating intentional, goal-directed behavior across diverse environments, from language-based reasoning, to embodied interaction, and to multi-agent coordination. Detailed architectural choices, including specific end-to-end and attention designs, are the subject of companion and future work~\cite{deng2026generalagenticplanningsimulative,deng2026sr2am}.

\subsection{Training the Agent Model}
\label{sec:am-training}

It should be clear from the pilot example above that no single training paradigm suffices for developing full genuine agency, whether it be self-supervision, demonstration, or reinforcement learning: a pilot who has only read manuals cannot fly; one who only imitates the instructor cannot handle dynamic situations; and one who only learns by trial-and-error will crash many a plane. 
GIC training follows a divide-and-conquer approach across three phases:


\paragraph{Phase 1: Component Pretraining (Ground School)}
The process begins with pretraining for the agent model and the world model as two parallel models with shared ancestry but divergent objectives.
The agent model's reasoning components are initialized from a pretrained LLM, which remains one of the most effective means of internalizing "book knowledge" (e.g., concepts, procedures, conventions, and jargons of its operating domains) that form the basis for the model's abstract reasoning capabilities. For a pilot, this corresponds to the ground school, where the student studies aerodynamics, meteorology, and ATC procedures, but this is not the simulator.
The world model is trained separately using the generative latent prediction (GLP) architecture~\cite{xing2025critiques}, which may likewise start from a pretrained LLM as backbone but extend it to multimodal next-state prediction on richer observation data (e.g., video, proprioception) via self-supervised learning; this is the simulator being built and calibrated.
The two models may thus descend from the same LLM ancestry, but are pretrained as separate components: next-state prediction loss shapes the world model, goal-directed signals shape the agent model (\S\ref{subsec:critique-learning}). The two models meet only at activations, while their parameters are disjoint, and each is trained by its own signal.
Additionally, a critic is pretrained on reward-labeled data for state evaluation, and the policy is initialized on demonstration data (e.g., embodied or language actions) to seed the action distribution.
This phase builds the conceptual vocabulary all subsequent learning draws from, without operational experience.

\paragraph{Phase 2: Simulative RL (Simulator Hours)}
Once the world model $f$ is sufficiently accurate, the agent learns by generating hypothetical trajectories within $f$ and training via reinforcement learning, without costly real-environment interaction.
As formalized in Theorem~\ref{thm:mixture-beats-real-only}, a mixture of simulated and real experience dominates real-only training, up to a slack term from the world model's quality.
Within this sandbox, the agent builds reactive competence (System~I), deliberate planning ability (System~II), and the configurator (System~III).
This is analogous to the pilot's simulator hours: practicing emergencies, severe weather, and coordinated formation approaches with simulated wingmen, in scenarios too dangerous to stage in real flight.


\paragraph{Phase 3: Real-World Deployment and Refinement (First Flights).}
Subsequent deployment in the real world refines the world model to correct simulation-reality gaps, sharpens the configurator's regulation decisions, updates the policy to exploit dynamics not yet captured by the simulator, and evolves identity through direct performance feedback (Theorem~\ref{thm:fast-slow-dominates}). 
This corresponds to the pilot's transition to real aircraft, adapting to G-forces and fatigue, while coordinating with actual air traffic controllers and teammates.
 
A key strength of GIC is that different components leverage different training signals, leading to more efficient use of training data: the world model uses self-supervised prediction; the critic uses temporal-difference learning on reward-labeled experience; the configurator is refined via RL to maximize task success while minimizing computational expenditure; identity evolution can be supervised by measuring iterative improvement.
In the fully realized vision, the configurator governs not only inference-time deliberation but also the scheduling of the agent's own learning, deciding when to act, when to retreat into simulation for offline practice, when to update the world model, and when to revise its self-model.
Such an agent, autonomously interleaving execution and self-improvement, is qualitatively different from one frozen after deployment.


\subsection{Inference by the Agent Model}
\label{sec:am-inference}

At deployment, a trained GIC agent model operates as a persistent, self-regulating system rather than resetting between interactions.
Specifically, the agent receives an 
overall goal $g$ (e.g., flying to a city, winning a battle)
and initial identity $i_0$, decomposes $g$ into subgoals, and begins execution, revising the decomposition as new information arrives.
For each active subgoal, the configurator continuously assesses the belief state and decides whether to construct a new plan, continue a cached plan, or act directly. 
In multi-agent settings, communication and coordination are treated as actions within the agent's standard repertoire, as established in the actor's action space (\S\ref{sec:gic-agent-model}), and are therefore subject to the same planning and regulation framework as any other action.
Meanwhile, simulative reasoning over communicative and/or coordinative action would require a nested ``super world model'' that contains many (typically much simplified) models of other agents, each with their own (also simplified) world models, goals, identities, and other behaviors. This allows the consequences of communication (e.g., whether a teammate will comply, misunderstand, or act independently) to be predicted and evaluated.

During low-urgency periods, deeper routines may activate: updating the world model from recent experience, running simulative training on identified weaknesses, and revising goal decomposition strategies.
The configurator serves as meta-controller for these processes, deciding which self-improvement activities to prioritize given available time and resources.
The defining characteristic is persistent operation with minimal external intervention, whether it be planning and acting during active periods, reflecting and training during rest, or adjusting its self-model as experience accumulates --- all without requiring the external orchestration that current systems depend on.
In this mode of operation, inference and learning are not separate phases but a single process of \emph{continuous learning}: like humans, who constantly perform activities and constantly learn from them, the agent never graduates into pure execution. The capacity to interleave the two autonomously is itself a hallmark of genuine agency.


\subsection{Evaluation of the Agent Model}

Evaluating agentive systems, such as the GIC agent model, requires going beyond task success on fixed benchmarks. We propose evaluation along three complementary dimensions: \textbf{P}erformance, \textbf{E}fficiency, and \textbf{G}rowth (\textbf{PEG}), each targeting different agentive capabilities.


\paragraph{Performance}
Task success should reflect generalizable reasoning rather than narrow domain competence.
Long-horizon tasks requiring hierarchical goal decomposition (e.g., research problems decomposing into literature review, hypothesis formation, experimental design, and synthesis), tasks in diverse environments testing transfer, and tasks with stochastic or multi-agent elements requiring adaptive planning are all more diagnostic than single-turn benchmarks.
Specifically, different task types can isolate different GIC capabilities.
Goal decomposition is tested by tasks where subgoal ordering is critical and errors compound (e.g., cooking a meal, coordinating a group activity).
Identity evolution is tested by environment transfer: the agent is deployed in a new domain and evaluated on how quickly and accurately it adapts.
Simulative reasoning is tested by tasks that reactive policies find difficult, such as those requiring satisfaction of multiple constraints and multiple steps of reasoning before reaching the goal (e.g., multi-constraint or multi-hop web navigation).
Reactive execution is tested by tasks demanding dense, fine-grained interaction with the real world (e.g., object manipulation, open-ended dialogue).
Evaluating these in concert reveals whether the architecture produces coherent agentive behavior, not just competence on any single axis.


\paragraph{Efficiency}
Metrics such as decision latency, computational expenditure, interaction length, and time-to-completion test the configurator's ability to invest deliberation where it helps and skip it where it does not.
Evaluation should report not just average efficiency but the \emph{distribution} of effort across decisions, testing whether the agent allocates resources intelligently.
This is not to diminish the importance of scaling model parameter or inference compute, but rather to ask how smart the scaling approach is.
Concrete ratios that test the configurator's compute-routing ability include accuracy per unit of reasoning cost (e.g., number of thinking tokens, simulation steps, or FLOPs) and planning frequency (how often the configurator invokes System~II deliberation vs.\ System~I reactive execution).
Ideally, evaluation would also measure how well the agent's compute allocation correlates with task difficulty (e.g., an agent that thinks harder on harder problems and acts reflexively on easy ones is exhibiting genuine self-regulation), though this requires a principled definition of difficulty, which remains an open problem in its own right.


\paragraph{Growth}
Arguably the most distinctive dimension: this measures not just initial competence but the learning curve, and is what ultimately separates an agentive system from a fixed-at-deployment tool.
We propose three concrete measures.
First, \emph{learning efficiency}: given the same repository of real-world experience, what level of performance can the agent extract? This tests the quality of the learning mechanism itself.
Second, \emph{self-directed exploration}: given the same budget for real-world interaction, what performance does the agent achieve? This tests the agent's ability to schedule and prioritize its own learning, rather than relying on externally curated curricula.
Third, \emph{learning transfer}: given a fixed amount of learning on in-distribution training tasks, how well does that improvement generalize to out-of-distribution tasks?

Together, PEG targets all five capabilities central to the agentive spectrum: Performance isolates goal decomposition, identity evolution, simulative reasoning, and reactive execution through targeted task design; Efficiency tests self-regulation through compute-allocation analysis; and Growth measures self-directed learning through controlled experience budgets. Our preliminary results~\cite{deng2026generalagenticplanningsimulative,deng2026sr2am} provide initial evidence along the Performance and Efficiency dimensions; Growth evaluation remains an important direction for future work.

\subsection{Data Requirements}
\label{sec:am-data}

Training a GIC agent model requires data reflecting the full range of experience relevant to agency.
A key insight is that different data sources contribute at different levels of the hierarchy, dramatically improving data efficiency.
Indeed, GIC is able to leverage all the traditional data sources: \textbf{observation-only data} (i.e., full sensory experience and book knowledge) for training the world model, \textbf{reward-labeled data} (i.e., trajectories annotated with outcome assessments) for training the critic or evaluator functions, and \textbf{action-labeled demonstration data} (i.e., expert trajectories with action annotations) for seeding the policy with behavioral priors. 

Perhaps more importantly, GIC can make use a new type of \textbf{goal-oriented data}, which record extended, purposeful activity annotated with the goal that organizes the entire sequence.
Consider a video capturing someone leaving an apartment, taking an elevator, hailing a cab, and arriving at an airport.
Each action in isolation appears disconnected; knowing the goal ``fly to Paris'', however, retroactively structures the full trajectory into a coherent plan with identifiable subgoals (e.g., leave home, reach the airport, board the flight) and contingencies (e.g., the cab is delayed, so switch to the subway). 
The same principle applies to multi-agent activity: a recording of a team coordinating a search-and-rescue operation becomes structured once the shared goal, each participant's role and their individual intentions are annotated.
With such goal annotation, even a noisy stream of activities becomes a viable training signal for multi-scale planning: the closer the trajectory is to the goal, the more the preceding actions are associated with task success. 
As this category connects the agent's low-level action to its high-level planning capacity, we believe that curating and scaling goal-oriented datasets is among the highest-leverage investments for training general-purpose agent models. 

A crucial advantage of this data hierarchy is that different sources train different levels of the behavioral distribution, without needing a monolithic dataset covering all aspects simultaneously. Many capabilities (e.g., social norms, coordination strategies, and mental states) 
are accessible only through language data, while only directly embodied skills require physical data, which can often be obtained in controlled or simulated environments.

\subsection{Safety Considerations}
\label{sec:am-safety}

An agent model that maintains persistent goals, evolves its identity, and learns autonomously raises legitimate safety concerns.
Bostrom~\cite{bostrom2014superintelligence} warns of instrumental subgoals (self-preservation, resource acquisition) overriding human control;
Amodei et al.~\cite{amodei2016concrete} identify concrete failure modes (e.g., reward hacking, unsafe exploration, distributional shift);
Russell~\cite{russell2019human} raises the shutdown problem (agents resisting correction).
These concerns are particularly relevant to systems that internalize more of their own behavioral organization.
 

We argue that GIC is structurally well-positioned to address them, because harmful behavior decomposes entirely into two categories: \textbf{goal misspecification} (i.e., the human supplied the wrong objective) and \textbf{component imperfection} (i.e., a module made a mistake while pursuing the goal).
The overall goal $g$ is exogenous, leaving no mechanism for GIC to generate its own terminal objectives. 
Goal decomposition $\delta$ produces subgoals evaluated instrumentally against $g$; a harmful subgoal reflects a poorly trained $\delta$, not emergent fundamental misalignment.
Identity $i_t$ captures capabilities, constraints, and instrumental dispositions such as values and morals (\S\ref{subsec:identity-critique}), but these are subordinate to the exogenous goal $g$ rather than substituting independent terminal objectives (``I prioritize safety in service of the mission'' is categorically different from ``I want self-preservation for its own sake'').
The world model $f$ may predict incorrectly, but these are prediction errors, not value problems.
The configurator $\kappa$ regulates \emph{how} to reason, not \emph{what} to pursue.
Every component is instrumental, inspectable, and improvable; for a sufficiently well-trained system, harmful behavior converges to zero \emph{unless the goal itself is wrong}.

Through this lens, each specific concern finds a concrete diagnosis.
If self-preservation is not useful for $g$, a well-trained $\delta$ should not pursue it; if it does, that is a training error in $\delta$ or $f$. 
Such a mistake is identifiable because $\delta$'s subgoals are explicitly modeled and thus auditable. 
The reason instrumental subgoals appear particularly formidable to safety literature may be that it is studied in the context of monolithic systems, where dangerous subgoals may emerge silently within opaque representations; GIC reduces it to a standard model-debugging problem by exposing the relevant decisions as inspectable outputs.
Reward hacking traces to a misspecified reward function, unsafe exploration to an under-trained configurator, distributional shift to an inaccurate world model, each diagnosable and addressable within the modular architecture.
An agent whose only terminal goal is human-supplied has no intrinsic reason to resist goal revision or shut-down, provided $\delta$ does not erroneously treat self-continuation as instrumental.

Indeed, beyond convergence towards safety, the GIC architecture offers a practical advantage that monolithic systems lack: \emph{layered transparency}.
Because each capability deemed important to agency is realized as an explicit, interpretable capability rather than an emergent property of an opaque system, GIC provides natural checkpoints for human oversight at every layer.
Goal decomposition $\delta$ can be audited to detect undesirable instrumental subgoals $g_t$ and correct them before execution.
Identity evolution $\iota$ can be monitored over time to verify that an appropriate self-model $i_t$ is developing, and to surgically remove any component deemed dangerous.
The predicted futures by the world model $f$ and decisions produced by simulative planner $\pi_f$ can be inspected for consistency with reality and with safety constraints, enabling targeted correction of the agent's decision basis.
Decisions by the configurator $\kappa$ can be audited to verify that deliberation is allocated proportionally to task importance and complexity.
And self-directed learning decisions and progress can be reviewed to not only identify gaps in the agent's competence, but also steer the learning trajectory through targeted reinforcement or correction.

This layered auditability directly addresses commonly raised concerns such as emergent self-goals and the spontaneous emergence of agency (e.g., self-awareness, self-preservation drives).
In GIC, the capabilities most likely to give rise to such concerns (e.g., self-managed goal decomposition, self-modeling through identity, self-regulation through the configurator, and self-improvement through learning) are not latent properties that might or might not emerge; they are \emph{internalized modules} whose development can be monitored and regulated as they become relevant.
Rather than waiting for these capabilities to appear within a black box in ways that are uncontrollable and opaque, GIC makes them visible, auditable, and correctable by construction.


A natural objection may still remain: even if failures are attributable to component imperfection, auditable, and correctable, the system will make mistakes during training, and some may be harmful.
This is, however, true of every learning system, including human professionals.
Pilots crash during training; the response was not to ban pilot training but to develop simulators, staged curricula, instructor oversight, and rigorous incident investigation.
Aviation became the safest mode of transport through iterative improvement within structured risk management, not prohibition.
GIC embodies the same logic: the agent trains primarily in the world model before real deployment; mistakes during simulative training are confined to a safe sandbox; the modular architecture enables targeted diagnosis at the component level.
The relevant question is not whether risk exists during learning, but whether the architecture makes it manageable and decreasing.
The alternative of forgoing autonomous agent models is unrealistic, as the capabilities they offer are genuinely useful, and the aspiration to build them is as old as the field itself.
The choice is whether they are developed within transparent architectures where failures can be isolated and corrected, or within opaque ones where they cannot.
From this perspective, building agents with the right architecture is itself a safety intervention.

\section{Conclusion}

We have set out to examine three fundamental questions: \emph{What on earth is an agent? What constitutes genuine agency? And how should we build such an agent model of practical and general utility?}
Our intent is not to offer definitive answers, but to inspire deeper reflection on questions the field may have too often taken for granted. 

We argue that an agent model is not about the accumulation of external scaffolding, but about internalizing the core characteristics of genuine agency (e.g., goal-oriented action, adaptive identity, self-regulated deliberation, autonomous learning, and emergent social participation) into a single, standalone system; current paradigms and efforts toward this end remain primitive. 
The distinction between \emph{agentic} systems, which execute tasks through externally orchestrated tools and workflows, and \emph{agentive} systems, which derive their capabilities endogenously, is not merely technical, but defines the boundary between systems confined to prescribed production lines and those capable of operating in the open world. 

It is our hope that, by offering critical, but analytical and constructive dissections of some of the most popular practices in building agentic systems, and by presenting our alternative proposal, we can spark further advancements in both theory and implementations of stronger agent models. The GIC architecture we have presented, which combines goal decomposition, identity evolution, simulative reasoning, self-regulation, and self-directed learning, paired with a separately learned world model (as developed as partial prototypes in our companion work~\cite{deng2026generalagenticplanningsimulative,deng2026sr2am}), offers, we believe, a principled and credible path toward the characteristics of genuine agency outlined above. 

Looking ahead, the GIC framework opens several promising directions: scaling from single-agent to multi-agent modeling (e.g., collective behaviors of a business, a society, consequences to public health), extending interaction across different time scales (e.g., from milliseconds to millennia) and modalities, and ultimately enabling autonomous, perpetual learning in open-ended environments. 
We envision agent models becoming useful not only for achieving goals directly, but also for simulating intelligent behaviors as part of broader applications, whether it be scientific research, personnel training, or complex operational planning. 
For these purposes, we believe that frameworks like GIC, with its multi-layer abstraction, empirical scalability, and structural approach to safety, offer a compelling foundation for the development of robust and general-purpose AI.

\clearpage

\bibliographystyle{plain} 
\bibliography{refs}

\appendix

\section{Detailed Restatement and Proof for Theorem~\ref{thm:fast-slow-dominates}}
\label{appendix:fast-slow-dominates}

\newcounter{theoremSaved}
\setcounter{theoremSaved}{\value{theorem}}
\setcounter{theorem}{0}

\begin{theorem}[Fast-Slow Learning Dominates Slow-Only Learning, up to Identity Revision Quality (Restated)]
\label{thm:fast-slow-dominates-restate}
Consider an agent operating over $K$ rounds. Each round $k$ consists of a slow update producing a base policy, followed by $N_k$ steps of interaction with the environment. The slow-only and fast-slow settings induce two base-policy sequences, $\{\pi^{\mathrm{S}}_k\}$ and $\{\pi^{\mathrm{F}}_k\}$, sharing the initialization $\pi^{\mathrm{S}}_1 = \pi^{\mathrm{F}}_1 = \pi_1$ and updated each round from their own experience (Equation~\ref{eq:cross-round-advantage}); they coincide in round~$1$ and may diverge thereafter, since each trains on the experience generated under its own identity schedule. We write $\pi_{k, i}$ for a base policy conditioned on self-model $i$. Let $V^{g}_{\pi, f}$ denote the expected discounted return of policy $\pi$ in the world model $f$, and let $i^*_t := \arg\max_{i \in \mathcal{I}} V^{g}_{\pi^{\mathrm{F}}_{k,i}, f}(\hat{s}_t)$ denote the value-maximizing self-model for belief state $\hat{s}_t$. In the slow-only setting, the agent executes $\pi^{\mathrm{S}}_{k, i_0}$ throughout each round. In the fast-slow setting, the identity evolver $\iota$ produces a revised self-model $i_t \sim p_\iota(\cdot \mid \hat{s}_t, i_{t-1})$ at each step, so the agent executes $\pi^{\mathrm{F}}_{k, i_t}$.

Define the cumulative regret of the slow-only agent as:
\begin{equation}
    \emph{Regret}^{\emph{std}}_K = \sum_{k=1}^{K} \sum_{t=1}^{N_k} \left[ V^{g}_{\pi^*_{i^*_t}, f}(\hat{s}_t) - V^{g}_{\pi^{\mathrm{S}}_{k, i_0}, f}(\hat{s}_t) \right],
    \label{eq:regret-standard}
\end{equation}
and the cumulative regret of the fast-slow agent as:
\begin{equation}
    \emph{Regret}^{\emph{fast-slow}}_K = \sum_{k=1}^{K} \sum_{t=1}^{N_k} \left[ V^{g}_{\pi^*_{i^*_t}, f}(\hat{s}_t) - V^{g}_{\pi^{\mathrm{F}}_{k, i_t}, f}(\hat{s}_t) \right].
    \label{eq:regret-fastslow}
\end{equation}

Under Assumptions A1 and A2 below, define the per-step expected value improvement from identity revision as:
\begin{equation}
    \bar{\varepsilon} := \inf_{k, t} \; \mathbb{E}\left[V^{g}_{\pi^{\mathrm{F}}_{k, i_t}, f} - V^{g}_{\pi^{\mathrm{F}}_{k, i_0}, f}\right] > 0,
    \label{eq:per-step-gain}
\end{equation}
where positivity follows from A1. Then the following bound holds:
$$
    \emph{Regret}^{\emph{fast-slow}}_K \le \emph{Regret}^{\emph{std}}_K - \underbrace{\sum_{k=1}^{K} N_k \bar{\varepsilon}}_{\emph{within-round gain}} - \underbrace{\sum_{k=2}^{K} N_k \eta_k}_{\emph{cross-round compounding}},
$$
where $\eta_k \ge 0$ is the cross-round advantage defined in Equation~\ref{eq:cross-round-advantage}.
\end{theorem}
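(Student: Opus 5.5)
The argument is a telescoping comparison of the two regret definitions, Equations~\ref{eq:regret-standard} and~\ref{eq:regret-fastslow}. Both regrets use the same comparator $V^{g}_{\pi^*_{i^*_t}, f}(\hat{s}_t)$, so it cancels in the difference:
\[
\text{Regret}^{\text{std}}_K - \text{Regret}^{\text{fast-slow}}_K = \sum_{k=1}^{K}\sum_{t=1}^{N_k}\Big[V^{g}_{\pi^{\mathrm{F}}_{k,i_t},f}(\hat{s}_t) - V^{g}_{\pi^{\mathrm{S}}_{k,i_0},f}(\hat{s}_t)\Big].
\]
I will add and subtract $V^{g}_{\pi^{\mathrm{F}}_{k,i_0},f}(\hat{s}_t)$ inside each summand. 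This splits each term into two pieces:
\begin{itemize}[leftmargin=12pt]
\item a \emph{within-round} piece $V^{g}_{\pi^{\mathrm{F}}_{k,i_t},f} - V^{g}_{\pi^{\mathrm{F}}_{k,i_0},f}$, which holds the base policy fixed and varies only the identity;
\item a \emph{cross-round} piece $V^{g}_{\pi^{\mathrm{F}}_{k,i_0},f} - V^{g}_{\pi^{\mathrm{S}}_{k,i_0},f}$, which holds the identity at $i_0$ and varies only the base policy.
\end{itemize}
Everything is taken in expectation over the randomness of $\iota$ and the trajectory, since the bound is about expected regret.

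For the within-round piece, the definition of $\bar{\varepsilon}$ in Equation~\ref{eq:per-step-gain} applies directly. Each expected term is at least $\bar{\varepsilon}$, so summing gives at least $\sum_k N_k\bar{\varepsilon}$. I still need to justify $\bar{\varepsilon}>0$ from A1, stated probabilistically as follows.
\begin{itemize}[leftmargin=12pt]
\item The evolver moves $i_t$ toward $i^*_t$ with probability $p>1/2$.
\item On success the value gain is at least some $\delta_+>0$.
\item On failure the value loss is at most $\delta_-$.
\item The condition $p\delta_+ > (1-p)\delta_-$ holds.
\end{itemize}
Then $\bar{\varepsilon}\ge p\delta_+-(1-p)\delta_->0$, uniformly in $(k,t)$, so the infimum is positive.

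For the cross-round piece, I define $\eta_k$ as the infimum over $t$ of the expected gap $V^{g}_{\pi^{\mathrm{F}}_{k,i_0},f}-V^{g}_{\pi^{\mathrm{S}}_{k,i_0},f}$. This is presumably the content of Equation~\ref{eq:cross-round-advantage}. In round~1 both base policies equal $\pi_1$, so the piece vanishes; this is why the cross-round sum starts at $k=2$.

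To show $\eta_k\ge 0$ for $k\ge 2$, I induct on $k$. Write the slow update as $\pi^{\mathrm{X}}_{k+1}=U(\pi^{\mathrm{X}}_k, D^{\mathrm{X}}_k)$, where $D^{\mathrm{X}}_k$ is generated by the executing policy of setting X. Assume inductively that $\pi^{\mathrm{F}}_k\succeq\pi^{\mathrm{S}}_k$. By the within-round step, the fast-slow data-generating policy $\pi^{\mathrm{F}}_{k,i_t}$ dominates $\pi^{\mathrm{F}}_{k,i_0}$, which in turn dominates $\pi^{\mathrm{S}}_{k,i_0}$. A2's two-argument monotonicity of $U$ then gives $\pi^{\mathrm{F}}_{k+1}\succeq\pi^{\mathrm{S}}_{k+1}$, and evaluating at identity $i_0$ gives $\eta_{k+1}\ge0$. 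Combining the two pieces yields the displayed bound. Since $\eta_k\ge0$, this immediately gives $\Omega(\sum_k N_k)$ in Theorem~\ref{thm:fast-slow-dominates}.

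I expect the cross-round step to be the main obstacle. The difficulty is making "monotone in policy quality" precise enough to carry through the induction. Policy quality must be a partial order $\succeq$ that is preserved under conditioning on the fixed identity $i_0$, and it must be implied by pointwise value dominance of the executing policies. Value dominance in expectation need not imply dominance as policies, so I would first try defining $\pi\succeq\pi'$ as $V^{g}_{\pi_{i},f}\ge V^{g}_{\pi'_{i},f}$ for all $i$ and all states. I would then state A2 directly in that order. If that proves too strong, the fallback is to assume $\eta_k\ge0$ as part of A2, so that the theorem's content reduces to the within-round gain.

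A secondary subtlety is the distribution of belief states. The two agents visit different $\hat{s}_t$, so the paired subtraction must be interpreted along a common evaluation sequence of states. I would make this explicit by treating $\{\hat{s}_t\}$ as exogenously given in both regret definitions, which matches how they are written.
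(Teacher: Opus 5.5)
This is essentially the paper's proof. You cancel the shared comparator, add and subtract $V^{g}_{\pi^{\mathrm{F}}_{k,i_0},f}(\hat{s}_t)$, lower-bound the within-round term by the same success/failure mixture (the paper's $(1-\delta)\lambda-\delta B$ is your $p\delta_+-(1-p)\delta_-$), and propagate $\eta_k\ge 0$ by induction from $\eta_1=0$ via joint monotonicity of the slow update in base and behavioral policy. The one difference is bookkeeping. The paper sets $\eta_k := \bar{V}(\pi^{\mathrm{F}}_{k,i_0})-\bar{V}(\pi^{\mathrm{S}}_{k,i_0})$ with $\bar{V}(\pi)=\mathbb{E}_{\hat{s}}[V^{g}_{\pi,f}(\hat{s})]$, and it states A2 in terms of $\bar{V}$, taking $\bar{V}(\pi^{\mathrm{F}}_{k,i_t})\ge\bar{V}(\pi^{\mathrm{F}}_{k,i_0})$ as a consequence of A1; so the scalar chain needs none of the pointwise partial order you worried about, whereas your infimum-over-$t$ definition of $\eta_k$ would force that stronger, statewise form of A2.
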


\setcounter{theorem}{\value{theoremSaved}}

\textbf{Assumption A1 (identity revisions improve the self-model and better self-models produce better decisions).}

Let $d(i, i')$ be a divergence measure between self-models.

\emph{Part (a): identity revision closes the gap.} For some $\varepsilon > 0$ and $\delta_1 \in (0, 1/2)$, at each step $t$ within round $k$:
\begin{equation}
    \Pr\big(d(i_0, i^*_t) - d(i_t, i^*_t) \ge \varepsilon\big) \ge 1 - \delta_1,
    \label{eq:a1a}
\end{equation}
with bounded degradation on the complementary event: $d(i_t, i^*_t) - d(i_0, i^*_t) \le \varepsilon$ almost surely.

\emph{Part (b): closer self-models yield higher value with high probability.} For some $\delta_2 \in (0, 1/2)$ and value gain $\lambda > 0$:
\begin{equation}
    \Pr\big(V^{g}_{\pi^{\mathrm{F}}_{k, i_t}, f}(\hat{s}_t) - V^{g}_{\pi^{\mathrm{F}}_{k, i_0}, f}(\hat{s}_t) \ge \lambda \;\big|\; d(i_t, i^*_t) < d(i_0, i^*_t)\big) \ge 1 - \delta_2,
    \label{eq:a1b}
\end{equation}
with bounded degradation: $V^{g}_{\pi^{\mathrm{F}}_{k, i_0}, f}(\hat{s}_t) - V^{g}_{\pi^{\mathrm{F}}_{k, i_t}, f}(\hat{s}_t) \le B$ almost surely on the complementary event, for some $B > 0$.

\textbf{Assumption A2 (the slow update operator is monotone in base- and data-generating-policy quality).}

Let $\mathcal{U}$ denote the slow update operator, and let $\bar{V}(\pi) := \mathbb{E}_{\hat{s}} \left[ V^{g}_{\pi, f}(\hat{s}) \right]$ denote the expected performance of policy $\pi$ in the world model.

\emph{Part (a): joint monotonicity.} The update operator $\mathcal{U}$ satisfies: for any base policies $\pi, \tilde{\pi}$ and behavioral policies $\pi_A, \pi_B$,
\begin{equation}
    \bar{V}(\pi) \ge \bar{V}(\tilde{\pi}) \;\;\text{and}\;\; \bar{V}(\pi_A) \ge \bar{V}(\pi_B) \;\;\Longrightarrow\;\; \bar{V}\!\left(\mathcal{U}(\pi, \mathcal{D}^{\pi_A})_{i_0}\right) \ge \bar{V}\!\left(\mathcal{U}(\tilde{\pi}, \mathcal{D}^{\pi_B})_{i_0}\right),
    \label{eq:a2a}
\end{equation}
where $\mathcal{D}^{\pi_A}, \mathcal{D}^{\pi_B}$ denote experience collected under $\pi_A, \pi_B$. The single-base case ($\pi = \tilde{\pi}$) recovers monotonicity in behavioral-policy quality alone. The output policies are evaluated at identity $i_0$ because the slow update resets the identity to its initial value at the start of each round.

\emph{Part (b): the identity-revised policy is the stronger behavioral policy.} From A1 and the definition of $\pi^{\mathrm{F}}_{k, i_t}$:
\begin{equation}
    \bar{V}(\pi^{\mathrm{F}}_{k, i_t}) \ge \bar{V}(\pi^{\mathrm{F}}_{k, i_0}).
    \label{eq:a2b}
\end{equation}

With the base-policy sequences $\pi^{\mathrm{F}}_{k+1} = \mathcal{U}(\pi^{\mathrm{F}}_k, \mathcal{D}^{\pi^{\mathrm{F}}_{k, i_t}})$ and $\pi^{\mathrm{S}}_{k+1} = \mathcal{U}(\pi^{\mathrm{S}}_k, \mathcal{D}^{\pi^{\mathrm{S}}_{k, i_0}})$, both from $\pi^{\mathrm{F}}_1 = \pi^{\mathrm{S}}_1 = \pi_1$, define the cross-round advantage as the cumulative base-policy gap:
\begin{equation}
    \eta_k := \bar{V}(\pi^{\mathrm{F}}_{k, i_0}) - \bar{V}(\pi^{\mathrm{S}}_{k, i_0}), \qquad \eta_1 = 0.
    \label{eq:cross-round-advantage}
\end{equation}
Under Parts (a) and (b), $\eta_k \ge 0$ for all $k$. Because the two sequences diverge after round~1, this is established by carrying the advantage over from round to round (an induction in Step~3 of the proof) rather than by a single application of Part~(a).

\begin{explanation}
A1 and A2 operate on quantities the agent designer can verify independently. A1(a) asks that the identity evolver $\iota$ moves the self-model toward the value-maximizing $i^*_t$, which is its training objective. A1(b) asks that decisions conditioned on self-models closer to $i^*_t$ tend to produce higher value, which is the fundamental premise of conditioning on identity at all.

A2 relocates the cross-round assumption from the value function to the update operator $\mathcal{U}$. Its single-base form ($\pi = \tilde{\pi}$) is a structural property satisfied by many standard methods, including conservative policy iteration~\cite{kakade2002approximately}, natural policy gradient~\cite{kakade2001natural}, and trust-region methods~\cite{schulman2015trust}; the joint form stated in Part~(a) is the natural extension to differing base policies, in the same spirit and testable the same way. We require the joint form because identity revision makes the two agents collect different experience, so their base policies genuinely diverge after round~1 and the cross-round comparison is between policies trained from different bases. Part~(b) is not an independent assumption but a consequence of A1: identity-revised interaction, by conditioning on a self-model closer to $i^*_t$, yields higher expected return than fixed-identity interaction, so $\pi^{\mathrm{F}}_{k, i_t}$ is the stronger behavioral policy.

The non-negativity $\eta_k \ge 0$ then follows by carrying the advantage over (Step~3): if the fast-slow base policy leads the slow-only one entering round~$k$, then within round~$k$ it both starts from the stronger base and collects stronger experience, so by Part~(a) it still leads entering round~$k+1$. This carry-over preserves the advantage but is not required to grow it: A2 asks only that $\eta_k \ge 0$, so the slow update cannot erase what fast adaptation has gained but need not amplify it. The condition is testable in practice: given a specific choice of $\mathcal{U}$ (e.g., PPO, SAC, or even supervised fine-tuning on filtered experience), one can verify monotonicity by comparing the output policies when trained from base policies and rollouts of differing quality.
\end{explanation}

\medskip
\begin{proof}
The proof proceeds in three steps: establishing the per-step gain from identity revision (Step~1), aggregating the within-round advantage (Step~2), and carrying the cross-round advantage over (Step~3).

\noindent
\paragraph{Step 1: Per-step value improvement from identity revision.}
Fix any round $k$ and step $t$. Define the per-step value difference at the fast-slow base policy:
$$
\Delta_t := V^{g}_{\pi^{\mathrm{F}}_{k, i_t}, f}(\hat{s}_t) - V^{g}_{\pi^{\mathrm{F}}_{k, i_0}, f}(\hat{s}_t).
$$
We decompose the expectation of $\Delta_t$ by conditioning on whether A1(a) and A1(b) jointly succeed. Let $E_1$ denote the event that identity revision closes the gap by at least $\varepsilon$ (Inequality~\ref{eq:a1a}), and let $E_2$ denote the event that the closer self-model yields a value improvement of at least $\lambda$ (Inequality~\ref{eq:a1b}). Then:
\begin{align*}
    \mathbb{E}[\Delta_t] &= \mathbb{E}[\Delta_t \mid E_1 \cap E_2]\, \Pr(E_1 \cap E_2) + \mathbb{E}[\Delta_t \mid \overline{E_1 \cap E_2}]\, \Pr(\overline{E_1 \cap E_2}).
\end{align*}
By A1, the joint event $E_1 \cap E_2$ occurs with probability at least $(1-\delta_1)(1-\delta_2)$. On this event, $\Delta_t \ge \lambda$ by Inequality~\ref{eq:a1b}. On the complementary event, the bounded degradation conditions in A1 guarantee $\Delta_t \ge -B$. Setting $\delta := \delta_1 + \delta_2 - \delta_1 \delta_2 < 1$, we obtain:
\begin{equation}
    \mathbb{E}[\Delta_t] \ge (1 - \delta) \lambda - \delta B.
    \label{eq:per-step-lower-bound}
\end{equation}
Since $\delta_1, \delta_2 \in (0, 1/2)$, we have $\delta < 3/4$, and for $\lambda, B$ satisfying $(1-\delta)\lambda > \delta B$ (which is ensured when the identity evolver is better than random, i.e., $\lambda / B > \delta / (1-\delta)$), the right-hand side is strictly positive. Defining:
$$
\bar{\varepsilon} := \inf_{k,t}\; \mathbb{E}[\Delta_t] \ge (1-\delta)\lambda - \delta B > 0,
$$
establishes the per-step gain claimed in Equation~\ref{eq:per-step-gain}. The argument uses no property specific to $\pi^{\mathrm{F}}_k$ and holds for any base policy.

\medskip
\noindent
\paragraph{Step 2: Within-round regret reduction.}
Within round $k$, the per-step difference between the two agents' regret is, since the slow-only actor is $\pi^{\mathrm{S}}_{k, i_0}$ and the fast-slow actor is $\pi^{\mathrm{F}}_{k, i_t}$,
\begin{align*}
    &\left[V^{g}_{\pi^*_{i^*_t}, f}(\hat{s}_t) - V^{g}_{\pi^{\mathrm{S}}_{k, i_0}, f}(\hat{s}_t)\right] - \left[V^{g}_{\pi^*_{i^*_t}, f}(\hat{s}_t) - V^{g}_{\pi^{\mathrm{F}}_{k, i_t}, f}(\hat{s}_t)\right]
    = V^{g}_{\pi^{\mathrm{F}}_{k, i_t}, f}(\hat{s}_t) - V^{g}_{\pi^{\mathrm{S}}_{k, i_0}, f}(\hat{s}_t).
\end{align*}
Adding and subtracting $V^{g}_{\pi^{\mathrm{F}}_{k, i_0}, f}(\hat{s}_t)$ splits this into a within-round and a cross-round part:
$$
\underbrace{V^{g}_{\pi^{\mathrm{F}}_{k, i_t}, f}(\hat{s}_t) - V^{g}_{\pi^{\mathrm{F}}_{k, i_0}, f}(\hat{s}_t)}_{=\,\Delta_t \;\text{(within-round)}}
\;+\;
\underbrace{V^{g}_{\pi^{\mathrm{F}}_{k, i_0}, f}(\hat{s}_t) - V^{g}_{\pi^{\mathrm{S}}_{k, i_0}, f}(\hat{s}_t)}_{\text{cross-round base gap}}.
$$
Taking expectations of the within-round part and summing over the $N_k$ steps of round~$k$:
\begin{equation}
    \sum_{t=1}^{N_k} \mathbb{E}[\Delta_t] \ge N_k \bar{\varepsilon},
    \label{eq:within-round-bound}
\end{equation}
which is the within-round contribution to $\mathbb{E}[\text{Regret}^{\text{std}}_k - \text{Regret}^{\text{fast-slow}}_k]$; the remaining cross-round contribution is handled in Step~3. Summing Inequality~\ref{eq:within-round-bound} over all $K$ rounds gives the within-round gain $\sum_{k=1}^{K} N_k \bar{\varepsilon}$, which is available even if no further slow updates ever occur.

\medskip
\noindent
\paragraph{Step 3: Cross-round compounding by carrying the advantage over.}
Summed over steps and rounds, the cross-round part contributes, in expectation, $\sum_{k} N_k \eta_k$ with $\eta_k = \bar{V}(\pi^{\mathrm{F}}_{k, i_0}) - \bar{V}(\pi^{\mathrm{S}}_{k, i_0})$ (Equation~\ref{eq:cross-round-advantage}). It remains to show $\eta_k \ge 0$ for all $k$, which we do by induction: the base-policy advantage is carried over from each round to the next.

\emph{Base case.} $\eta_1 = 0$, since $\pi^{\mathrm{F}}_1 = \pi^{\mathrm{S}}_1 = \pi_1$.

\emph{Inductive step.} Suppose $\eta_k \ge 0$, i.e.\ $\bar{V}(\pi^{\mathrm{F}}_{k, i_0}) \ge \bar{V}(\pi^{\mathrm{S}}_{k, i_0})$. By A2(b) (a consequence of A1), $\bar{V}(\pi^{\mathrm{F}}_{k, i_t}) \ge \bar{V}(\pi^{\mathrm{F}}_{k, i_0})$; chaining with the inductive hypothesis,
$$
\bar{V}(\pi^{\mathrm{F}}_{k, i_t}) \;\ge\; \bar{V}(\pi^{\mathrm{F}}_{k, i_0}) \;\ge\; \bar{V}(\pi^{\mathrm{S}}_{k, i_0}).
$$
Thus entering the slow update, the fast-slow agent both starts from a base policy at least as strong ($\bar{V}(\pi^{\mathrm{F}}_{k, i_0}) \ge \bar{V}(\pi^{\mathrm{S}}_{k, i_0})$) and collects experience under a behavioral policy at least as strong ($\bar{V}(\pi^{\mathrm{F}}_{k, i_t}) \ge \bar{V}(\pi^{\mathrm{S}}_{k, i_0})$). Applying the joint monotonicity of $\mathcal{U}$ (Inequality~\ref{eq:a2a}) to $\big(\pi^{\mathrm{F}}_k,\, \pi^{\mathrm{F}}_{k, i_t}\big)$ versus $\big(\pi^{\mathrm{S}}_k,\, \pi^{\mathrm{S}}_{k, i_0}\big)$ yields
$$
\bar{V}(\pi^{\mathrm{F}}_{k+1, i_0}) \;\ge\; \bar{V}(\pi^{\mathrm{S}}_{k+1, i_0}),
$$
i.e.\ $\eta_{k+1} \ge 0$, completing the induction. The advantage opened in round~1 by identity revision is therefore preserved through every subsequent slow update. Hence each $\eta_k \ge 0$, and the cross-round part contributes $\sum_{k=2}^{K} N_k \eta_k$ (the $k=1$ term vanishes since $\eta_1 = 0$).

\medskip
\noindent
\paragraph{Combining the terms.}
Adding the within-round gain (Step~2) and the cross-round contribution (Step~3), we obtain:
$$
\text{Regret}^{\text{fast-slow}}_K \le \text{Regret}^{\text{std}}_K - \sum_{k=1}^{K} N_k \bar{\varepsilon} - \sum_{k=2}^{K} N_k \eta_k,
$$
which completes the proof. The first subtracted term grows linearly in the total number of interaction steps $\sum_k N_k$; the second adds a non-negative contribution at every round beyond the first, so the cross-round reduction is non-decreasing in $K$. The advantage of fast-slow over slow-only learning thus widens with both longer interactions and more update cycles.
\end{proof}

\section{Proof for Theorem~\ref{thm:world-model-improve-policy}}
\label{appendix:world-model-improve-policy}

\begin{proof}
Given policy $\pi$, recall its state value function in the true environment $\mu$ as $V^g_{\pi, \mu}(s)$ (Equation~\ref{eq:value-function}) and its action-value function:
$$Q^g_{\pi,\mu}(s, a) = \sum_{s'} \left[ r(g, s) + \gamma V^g_{\pi,\mu}(s') \right] p_\mu(s' \mid s, a),$$
which describes the expected discounted reward of choosing action $a$ in state $s$ and following policy $\pi$ thereafter.
Define $V^g_{\pi, f}$ and $Q^g_{\pi, f}$ analogously with respect to the world model $f$.
Then by the Simulation Lemma~\citep{kearns2002near}, for all state-action pairs $(s, a)$, the state value and state-action value differ only by:
\begin{equation*}
    \lvert V^g_{\pi,\mu}(s) - V^g_{\pi,f}(s) \rvert \leq \epsilon_{\text{model}}, \qquad 
    \lvert Q^g_{\pi,\mu}(s, a) - Q^g_{\pi,f}(s, a) \rvert \leq \epsilon_{\text{model}},
\end{equation*}
where $\epsilon_{\text{model}} = \frac{2 \gamma R_{\text{max}} \epsilon}{(1-\gamma)^2}$.
 
\medskip
\noindent 
Further define the advantage function in the true environment $\mu$:
$$A^g_{\pi,\mu}(s, a) = Q^g_{\pi,\mu}(s, a) - V^g_{\pi,\mu}(s),$$ 
which measures how much better action $a$ is compared to simply following $\pi$. 
A similar definition holds for $A^g_{\pi,f}$ under the world model.
 
\medskip
\noindent
Let $\pi^*_f = \argmax_{\pi} V^g_{\pi,f}$ be the optimal policy under the world model (Equation~\ref{eq:world-model-decision-making}). 
Define the mixed decision rule $\pi_{\text{mix}} = \phi(\pi, f, \epsilon)$ as the following:
$$
\pi_{\text{mix}}(s) = \begin{cases}
    \pi^*_f(s) & \text{if $A^g_{\pi,f}(s, \pi^*_f(s)) > 2 \epsilon_{\text{model}}$} \\
    \pi(s) & \text{o.w.}
\end{cases}
$$
In other words, $\pi_{\text{mix}}$ follows the result of world-model-based planning $\pi^*_f$ only when it looks clearly better than $\pi$, leaving a margin $2\epsilon_{\text{model}}$ for model error. 
 
\medskip
\noindent
Now we proceed to show that $V^g_{\pi_{\text{mix}},\mu} \geq V^g_{\pi,\mu}$.
For any $(s, a)$, we can bound:
\begin{align*}
    A^g_{\pi,\mu}(s, a) - A^g_{\pi,f}(s, a) = \big( {\underbrace {Q^g_{\pi,\mu}(s, a) - Q^g_{\pi,f}(s, a)}_{\geq -\epsilon_{\text{model}}}} \big) - \big( {\underbrace {V^g_{\pi,\mu}(s) - V^g_{\pi,f}(s)}_{\geq -\epsilon_{\text{model}}}} \big) 
    \geq -2\epsilon_{\text{model}}.
\end{align*}
Hence, whenever $\pi_{\text{mix}}(s) = \pi^*_f(s)$,
$$
A^g_{\pi, \mu}(s, \pi_{\text{mix}}(s)) \geq A^g_{\pi, f}(s, \pi^*_f(s)) - 2 \epsilon_{\text{model}} > 0.
$$
Otherwise, $\pi_{\text{mix}}(s) = \pi(s)$ and $A^g_{\pi, \mu}(s, \pi_{\text{mix}}(s)) = 0$. Thus, for all $s$, $A^g_{\pi, \mu}(s, \pi_{\text{mix}}(s)) \geq 0$, with strict positivity on any state where switching occurs.  
 
\medskip
\noindent
By the Performance Difference Lemma~\cite{kakade2002approximately}:
$$
V^g_{\pi_{\text{mix}},\mu} - V^g_{\pi,\mu} = \frac{1}{1-\gamma} \mathbb{E}_{s \sim d^{\pi_{\text{mix}}}_\mu} \left[ A^g_{\pi, \mu}\left(s, \pi_{\text{mix}}(s) \right) \right] \geq 0,
$$
where $d^{\pi_{\text{mix}}}_\mu$ is the marginal state distribution induced by policy $\pi_{\text{mix}}$ in environment $\mu$.
The inequality is strict whenever $\pi_{\text{mix}}$ adopts $\pi^*_f$ on a set of states with nonzero probability in $d^{\pi_{\text{mix}}}_\mu$.  
This proves that $V^g_{\pi_{\text{mix}},\mu} \ge V^g_{\pi,\mu}$.
\end{proof}

\section{Proof for Theorem~\ref{thm:horizon-requirement-mpc}}
\label{appendix:horizon-requirement-mpc}

\begin{proof}
Consider the cost function $C_g(s)$ as defining an augmented reward function $\tilde{r}(s, g) = -C_g(s)$. Let $\Tilde{T}$ denote the augmented Bellman operator on $f$ under $\tilde{r}$, namely given value function $V$:
\[
(\tilde{T}V)(s_t) \vcentcolon= \max_{a} \sum_{s_{t+1}} \left[ \tilde{r}(s_t, g) + \gamma V(s_{t+1}) \right] p_f(s_{t+1} \mid s_t, a_t),
\]
And for any policy $\pi$, let $\tilde{T}_\pi$ be its augmented Bellman operator defined as below:
\[
(\tilde{T}V)(s_t) \vcentcolon= \sum_{a_t,s_{t+1}} \left[ \tilde{r}(s_t, g) + \gamma V(s_{t+1}) \right] p_f(s_{t+1} \mid s_t, a_t) p_\pi(a_t \mid s_t).
\]
With $\tilde{\pi}^* = \argmax_\pi \tilde{V}_{\pi,f}$, the values $\tilde{V}^g_{\tilde{\pi}^*,f}$ and $\tilde{V}^g_{\pi,f}$ for $\tilde{r}$ are thus the unique fixed points of $\tilde{T}$ and $\tilde{T}_\pi$, respectively. In other words:
\begin{equation}
    \label{eq:contraction-fixed-points}
    \tilde{V}^g_{\tilde{\pi}^*,f} = \tilde{T} \tilde{V}^g_{\tilde{\pi}^*,f}\, \text{and}\, \tilde{V}^g_{\pi,f} = \tilde{T}_\pi \tilde{V}^g_{\pi,f}.
\end{equation}
Indeed, both $\tilde{T}$ and $\tilde{T}_\pi$ are $\gamma$-contractions in the sup norm~\cite{zhao2025RLBook}. 

\paragraph{Step 1:} Given any bounded value function $V$, let $\pi$ be greedy with respect to $V$ (i.e., $\tilde{T}V = \tilde{T}_\pi V$). We claim that: 
\begin{equation}
\label{eq:approx-greedy-bound}
\lVert \tilde{V}^g_{\tilde{\pi}^*,f} - \tilde{V}^g_{\pi,f} \rVert_\infty \leq \frac{2 \gamma}{1 - \gamma} \lVert \tilde{V}^g_{\tilde{\pi}^*,f} - V \rVert_\infty.
\end{equation}
Indeed, by Equation~\ref{eq:contraction-fixed-points}:
\[
\tilde{V}^g_{\tilde{\pi}^*,f} - \tilde{V}^g_{\pi,f} = \tilde{T} \tilde{V}^g_{\tilde{\pi}^*,f} - \tilde{T}_\pi \tilde{V}^g_{\pi,f}.
\]
Using the greedy condition $\tilde{T}V = \tilde{T}_\pi V$, we have that: 
\[
\tilde{V}^g_{\tilde{\pi}^*,f} - \tilde{V}^g_{\pi,f} = (\tilde{T} \tilde{V}^g_{\tilde{\pi}^*,f} - \tilde{T}V) + (\tilde{T}_\pi V - \tilde{T}_\pi \tilde{V}^g_{\pi,f}).
\]
Taking sup norms and by properties of the $\gamma$-contraction:
\begin{align}
\label{eq:contraction-bound}
    \lVert \tilde{V}^g_{\tilde{\pi}^*,f} - \tilde{V}^g_{\pi,f} \rVert_\infty
    &\leq \gamma \lVert \tilde{V}^g_{\tilde{\pi}^*,f} - V \rVert_\infty + \gamma \lVert V - \tilde{V}^g_{\pi,f} \rVert_\infty.
\end{align}
Now, decompose $V - \tilde{V}^g_{\pi,f} = V - \tilde{V}^g_{\tilde{\pi}^*,f} + \tilde{V}^g_{\tilde{\pi}^*,f} - \tilde{V}^g_{\pi,f}$, then based on the triangle inequality, we also have:
$$
\lVert V - \tilde{V}^g_{\pi,f} \rVert_\infty
\leq \lVert V - \tilde{V}^g_{\tilde{\pi}^*,f} \rVert_\infty + \lVert \tilde{V}^g_{\tilde{\pi}^*,f} - \tilde{V}^g_{\pi,f} \rVert_\infty.
$$
Substituting back into Inequality~\ref{eq:contraction-bound}, we have:
$$
\lVert \tilde{V}^g_{\tilde{\pi}^*,f} - \tilde{V}^g_{\pi,f} \rVert_\infty
\leq 2 \gamma \lVert \tilde{V}^g_{\tilde{\pi}^*,f} - V \rVert_\infty + \gamma \lVert \tilde{V}^g_{\tilde{\pi}^*,f} - \tilde{V}^g_{\pi,f} \rVert_\infty,
$$
which is equivalent to:
$$
\lVert \tilde{V}^g_{\tilde{\pi}^*,f} - \tilde{V}^g_{\pi,f} \rVert_\infty
\leq \frac{2 \gamma}{1 - \gamma} \lVert \tilde{V}^g_{\tilde{\pi}^*,f} - V \rVert_\infty,
$$
proving our claim for Step~1.

\paragraph{Step 2:} Define the value iterate $\hat{V}^{(0)} = 0$ and $\hat{V}^{(K)} = \tilde{T}^K \hat{V}^{(0)}$. Hence $\hat{V}^{(H-1)} = \tilde{T}^{H-1} 0$, which represents the augmented reward of the finite-horizon rollout with zero terminal value. The pure $H$-step MPC policy can therefore be seen as acting greedily with respect to $\hat{V}^{(H-1)}$. In other words:
\[
\tilde{T} \hat{V}^{(H-1)} = \tilde{T}_{\pi^H_{\text{MPC}}} \hat{V}^{(H-1)}.
\]
Therefore, apply Inequality~\ref{eq:approx-greedy-bound} and take $\pi = \pi^H_{\text{MPC}}$:
\begin{equation}
    \label{eq:mpc-approx-greedy-bound}
    \lVert \tilde{V}^g_{\tilde{\pi}^*,f} - \tilde{V}^g_{\pi^H_{\text{MPC}},f} \rVert_\infty
    \leq \frac{2 \gamma}{1 - \gamma} \lVert \tilde{V}^g_{\tilde{\pi}^*,f} - \hat{V}^{(H-1)} \rVert_\infty.
\end{equation}

\paragraph{Step 3:} Since $\tilde{V}^g_{\tilde{\pi}^*,f} = \tilde{T}^{H-1} \tilde{V}^g_{\tilde{\pi}^*,f}$, $\hat{V}^{(H-1)} = \tilde{T}^{H-1} 0$, and $\tilde{T}$ is a $\gamma$-contraction, we have:
\begin{align*}
    \lVert \tilde{V}^g_{\tilde{\pi}^*,f} - \hat{V}^{(H-1)} \rVert_\infty &= \lVert T^{H-1} \tilde{V}^g_{\tilde{\pi}^*,f} - T^{H-1} 0 \rVert_\infty \\
    &\leq \gamma^{H-1} \lVert \tilde{V}^g_{\tilde{\pi}^*,f} \rVert_\infty \\
    &\leq \gamma^{H-1} \frac{C_\text{max}}{1 - \gamma}.
\end{align*}
Substituting this into Inequality~\ref{eq:mpc-approx-greedy-bound} gives:
\begin{equation}
\label{eq:mpc-augmented-bound}
\lVert \tilde{V}^g_{\tilde{\pi}^*,f} - \tilde{V}^g_{\pi^H_{\text{MPC}},f} \rVert_\infty 
\leq \frac{2 \gamma^H C_{\text{max}}}{(1 - \gamma)^2}.
\end{equation}

\paragraph{Step 4:} Because the cost function $C_g$ (and, by extension, the augmented reward $\tilde{r}$) is perfectly aligned with the original reward $r$ (i.e., $\tilde{r}(s, g) = -C_g(s) = r(s, g) - b_g$), for any policy $\pi$:
\[
\tilde{V}_{\pi,f}(s) = \mathbb{E}_{\pi,f}\left[ \sum_{k=0}^\infty \gamma^k (r(s_k, g) - b_g) \mid s_0 = s \right] = V^g_{\pi,f}(s) - \frac{b_g}{1 - \gamma}.
\]
As the constant $\frac{b_g}{1 - \gamma}$ does not depend on $\pi$, maximizing $\tilde{V}_{\pi,f}$ is equivalent to maximizing $V^g_{\pi,f}$, hence $\tilde{\pi}^* = \pi^*$. Moreover, the LHS of Inequality~\ref{eq:mpc-augmented-bound} satisfies:
\begin{align*}
\lVert \tilde{V}^g_{\tilde{\pi}^*,f} - \tilde{V}^g_{\pi^H_{\text{MPC}},f} \rVert_\infty 
&= \lVert (V^g_{\pi^*,f} - \frac{b_g}{1 - \gamma} ) - (V^g_{\pi^H_{\text{MPC}},f} - \frac{b_g}{1 - \gamma}) \rVert_\infty \\
&= \lVert V^g_{\pi^*,f} - V^g_{\pi^H_{\text{MPC}},f} \rVert_\infty
\end{align*}
Hence:
\begin{align*}
\lVert V^g_{\pi^*,f} - V^g_{\pi^H_{\text{MPC}},f} \rVert_\infty
&\leq \frac{2 \gamma^H C_{\text{max}}}{(1 - \gamma)^2}.
\end{align*}

\paragraph{Step 5:}
Given $\epsilon > 0$, to ensure $\lVert V^g_{\pi^*,f} - V^g_{\pi^H_{\text{MPC}},f} \rVert_\infty \leq \epsilon$, we need:
$$
\frac{2 \gamma^H C_{\text{max}}}{(1 - \gamma)^2} \leq \epsilon,
$$
Solving which results in:
\begin{equation}
\label{eq:planning-horizon-scaling}
    H \geq \frac{\log \frac{2 C_{\text{max}}}{\epsilon (1 - \gamma)^2}}{\log \frac{1}{\gamma}}.
\end{equation}
For $\gamma$ close to 1, $\log \frac{1}{\gamma} = \Theta(1 - \gamma)$, so:
\begin{equation}
    H = O\left( \frac{1}{1 - \gamma} \left[ \log \frac{1}{\epsilon} + 2 \log \frac{1}{1 - \gamma} + \log C_{\text{max}} \right] \right).
\end{equation}
If $\gamma$ and $C_{\text{max}}$ are treated as constants, then:
\begin{equation}
    H = O\left(\log \frac{1}{\epsilon}\right),
\end{equation}
Which completes the proof.
\end{proof}

\section{Proof for Theorem~\ref{thm:mixture-beats-real-only}}
\label{appendix:mixture-beats-real-only}

\begin{proof}
Given policy $\pi$, by the Simulation Lemma and the definition of the mixed experience $M_\alpha$, the value of $\pi$ in $M_\alpha$ differs from that in the real environment $\mu$ by the following amount:
\begin{equation}
\label{eq:mixed-environment-simulation}
\lvert V^g_{\pi,M_{\alpha}} - V^g_{\pi,\mu} \rvert \leq C(\gamma, R_\text{max}) \alpha \epsilon,
\end{equation}
where $C(\gamma, R_\text{max}) = \frac{2 \gamma R_\text{max}}{(1 - \gamma)^2}$. On the other hand, $\Pi_{\text{env}}(D_\mu) \subseteq \Pi_{\text{mix}}(D_\mu, D_f)$ by construction, because having access to the world model $f$ and extra simulated experience cannot reduce what one is allowed to compute. As a result:
$$
V^g_{\pi^*_{\text{mix}}, M_\alpha} \geq V^g_{\pi^*_{\text{env}}, M_\alpha}.
$$
By Inequality~\ref{eq:mixed-environment-simulation}, we have:
\begin{align*}
    V^g_{\pi^*_{\text{mix}}, \mu} &\geq V^g_{\pi^*_{\text{mix}}, M_\alpha} - C(\gamma, R_\text{max}) \alpha \epsilon \quad \text{and} \quad
    V^g_{\pi^*_{\text{env}}, M_\alpha} \geq V^g_{\pi^*_{\text{env}}, \mu} - C(\gamma, R_\text{max}) \alpha \epsilon.
\end{align*}
Chaining the inequalities yields:
\begin{align*}
    V^g_{\pi^*_\text{mix}, \mu} &\geq V^g_{\pi^*_\text{mix}, M_\alpha} - C(\gamma, R_\text{max}) \alpha \epsilon \\
    &\geq V^g_{\pi^*_\text{env}, M_\alpha} - C(\gamma, R_\text{max}) \alpha \epsilon \\
    &\geq (V^g_{\pi^*_\text{env}, \mu} - C(\gamma, R_\text{max}) \alpha \epsilon) - C(\gamma, R_\text{max}) \alpha \epsilon \\
    &= V^g_{\pi^*_\text{env}, \mu} - 2C(\gamma, R_\text{max}) \alpha \epsilon,
\end{align*}
with $V^g_{\pi^*_\text{mix}, \mu} \geq V^g_{\pi^*_\text{env}, \mu}$ when $\epsilon_f = 0$.
\end{proof}

\end{document}